\newcolumntype{L}{>{$}l<{$}} 
\newcolumntype{R}{>{$}r<{$}} 
\newcommand{\Omit}[1]{}
\newcommand{\tup}[1]{\langle #1 \rangle}
\newcommand{\set}[1]{\ensuremath{\left\{#1 \right\}}}
\newcommand{\setst}[2]{\ensuremath{\left\{#1 \mid #2 \right\}}}
\newcommand{\abs}[1]{\ensuremath{\left\vert{#1}\right\vert}}
\newcommand{\citeay}[1]{\citeauthor{#1} (\citeyear{#1})}
\newcommand{\psep}{\ensuremath{\,|\,}}  
\newtheorem{definition}{Definition}
\newtheorem{theorem}[definition]{Theorem}
\newtheorem*{theorem*}{Theorem}
\newenvironment{example}{\noindent\textbf{Example.}\,}{\qed}
\renewcommand{\P}{\mathcal{P}}
\newcommand{\Q}{\mathcal{Q}}
\newcommand{\T}{\mathcal{T}}
\newcommand{\F}{\ensuremath{\mathcal{F}}}
\renewcommand{\S}{\mathcal{S}}
\newcommand{\G}{\mathcal{G}}
\newcommand{\Sel}[1]{Select(#1)}
\newcommand{\Good}[1]{Good(#1)}
\newcommand{\msat}{Max-SAT}
\newcommand{\boolcast}[1]{\llbracket #1 \rrbracket}
\newcommand{\smallpar}[1]{{\medskip \noindent \textbf{#1.} }}
\newcommand{\CC}{C\nolinebreak[4]\hspace{-.05em}\raisebox{.2ex}{\small ++}}
\newcommand{\pplus}{\hspace{-.05em}\raisebox{.15ex}{\footnotesize$\uparrow$}}
\newcommand{\mminus}{\hspace{-.05em}\raisebox{.15ex}{\footnotesize$\downarrow$}}
\newcommand{\tsf}{\ensuremath{T({\cal S},{\cal F})}}
\newcommand{\dtwol}{\texttt{D2L}}
\newcommand{\tsfi}[1]{\ensuremath{T_{#1}({\cal S},{\cal F})}}
\newcommand{\mli}[1]{\textit{#1}}
\providecommand{\noopsort}[1]{}
\def\blfootnote{\gdef\@thefnmark{}\@footnotetext}
\title{Learning General Policies from Small Examples Without Supervision$^*$}
\author{
Guillem Franc\`es,\textsuperscript{\rm 1}
Blai Bonet,\textsuperscript{\rm 1}
Hector Geffner\textsuperscript{\rm 2}\\
}
\begin{document}
\allowdisplaybreaks

\maketitle


\begin{abstract}
 Generalized planning is concerned with the computation of general policies that
 solve multiple instances of a planning domain all at once.
 It has been recently shown that these policies can be computed in two steps:
 first, a suitable abstraction in the form of a qualitative numerical planning
 problem (QNP) is learned from sample plans, then the general policies are
 obtained from the learned QNP using a planner. In this work, we introduce an
 alternative approach for computing more expressive general policies which does
 not require sample plans or a QNP planner. The new formulation is very simple
 and can be cast in terms that are more standard in machine learning: a large
 but finite pool of features is defined from the predicates  in the planning
 examples using  a general grammar, and a small subset of features is sought for
 separating ``good'' from ``bad'' state transitions, and goals from non-goals.
 The problems of finding such a ``separating surface'' while labeling
 the transitions as ``good'' or ``bad'' are jointly addressed as a single
 combinatorial optimization problem expressed as a Weighted  \msat{} problem.
 The advantage of looking for the simplest policy in the given feature space
 that solves the given examples, possibly non-optimally, is that many domains
 have no general, compact policies that are optimal.
 The approach yields general policies for a number of benchmark domains.
\end{abstract}%
\blfootnote{
 $^*$This paper extends \cite{frances-et-al-aaai2021} with an appendix providing
    proofs and further details on the methodology and the empirical results.}

\section{Introduction}

Generalized planning is concerned with the computation of general policies or plans   that   solve   multiple  instances of a given planning domain all
at once \cite{srivastava08learning,bonet:icaps2009,hu:generalized,BelleL16,anders:generalized}. For example, a general plan for clearing a  block $x$
in \textbf{any}  instance of Blocksworld involves a loop where the topmost block above $x$ is picked up and placed on the table
until  no such block remains. A general  plan for solving  any Blocksworld instance is also
possible, like one where  misplaced  blocks and those above them are moved  to the table,
and then  to  their targets in order. The key question in generalized planning is how to represent and compute such general plans from the domain
representation.

In one of the most general formulations,  general policies are obtained from an abstract planning model
expressed as a qualitative numerical planning problem or QNP \cite{sid:aaai2011}. A QNP is a   standard STRIPS planning
model extended with non-negative  numerical variables that can be decreased or increased ``qualitatively''; i.e., by uncertain positive
amounts, short of making the variables negative. Unlike standard planning with numerical variables \cite{helmert:numerical},
QNP planning is decidable,  and QNPs  can be compiled in polynomial time into fully observable non-deterministic (FOND)
problems \cite{bonet:qnps}

The main advantage  of the formulation of generalized planning based on QNPs  is that it  applies to  standard relational domains
where the pool of (ground) actions change from instance to instance.
On the other hand, while the planning domain is assumed to be  given,
the QNP abstraction is not, and hence it has to be written by hand  or learned. 
This is the approach of \citeay{bonet:aaai2019} where
generalized plans are obtained by learning  the QNP abstraction from  the domain
representation and  sample plans, and then solving the abstraction  with a QNP  planner.

In this work, we build on this thread but introduce an alternative approach
for computing general policies that is simpler, yet more powerful.
The learning problem is cast as a \textbf{self-supervised classification
problem} where
(1)~a
pool of features is automatically generated from
a general grammar applied to the domain predicates, and
(2)~a
small subset of features is sought for separating ``good'' from ``bad''
state transitions, and goals from non-goals.
The problems of finding the
``separating surface'' while labeling the transitions as ``good'' or ``bad''
are addressed jointly as a single combinatorial optimization task solved with
a Weighted \msat{} solver.
The approach yields general policies for a number of benchmark domains.

The paper is organized as follows. We first review related work and classical planning,
and introduce a new  language for expressing  general policies motivated by the work on QNPs.
We then present  the learning task, the computational approach for solving it, and
the experimental results.

\section{Related Work}

The computation of  general  plans from domain encodings and  sample plans
has been addressed in a number of  works \cite{khardon:action,martin:generalized,fern:generalized,josh:logical}.
Generalized planning has also been formulated as a problem in first-order logic
\cite{srivastava:generalized,sheila:generalized2019}, and general plans over finite horizons have been
derived using first-order regression \cite{boutilier2001symbolic,wang2008first,van2012solving,sanner:practicalMDPs}.
More recently, general policies for planning have been learned from PDDL domains and sample plans
using {deep learning}  \cite{trevizan:dl,sanner:dl,mausam:dl2}. Deep reinforcement learning methods \cite{atari}
have also  been used  to  generate  general policies from
images  {without assuming  prior symbolic  knowledge}
\cite{sid:sokoban,babyAI}, in certain cases accounting for  
objects and relations through the use of suitable architectures \cite{shanahan:review}. 
Our work  is closest to the
works of \citeay{bonet:aaai2019} and \citeay{frances-et-al-ijcai2019}.
The first provides a model-based approach to generalized planning where an abstract QNP model is learned from the domain
representation and sample instances and plans, which is then solved by a QNP planner \cite{bonet:qnps}.
The second learns a generalized value function in an unsupervised manner, under the assumption
that this function is linear.
Model-based approaches have an advantage over  inductive approaches that learn generalized plans;
like  logical approaches, they guarantee  that the resulting policies (conclusions) are correct provided that
the model (set of premises) is correct. The approach developed in this work does not make use of QNPs or planners
but inherits these  formal properties.

\section{Planning}

A (classical) planning instance is a pair  $P\,{=}\,\tup{D,I}$ where
$D$ is a  first-order planning \textbf{domain} and $I$ is an \textbf{instance}.
The  domain $D$ contains a set of predicate symbols and a set of action schemas with preconditions
and effects given by atoms $p(x_1, \ldots, x_k)$, where $p$ is a $k$-ary predicate symbol,
and each $x_i$ is a variable representing one of the arguments of the action schema.
The instance is a tuple $I\,{=}\,\tup{O,Init,Goal}$, where $O$ is a (finite) set of object
names $c_i$, and $Init$ and $Goal$ are  sets  of ground atoms $p(c_1, \ldots, c_k)$,
where $p$ is a $k$-ary predicate symbol.
This is indeed the structure of planning problems as expressed in  PDDL \cite{pddl:book}.

The states associated with  a problem   $P$ are the possible sets of ground atoms,
and the state graph $G(P)$ associated with $P$  has the states of $P$ as nodes,
an initial state $s_0$ that corresponds to the set of atoms in $Init$,
and a set of goal states $s_G$ with all states that include the atoms in $Goal$.
In addition, the graph has a directed edge $(s,s')$ for each state transition that is possible
in $P$, i.e.\ where there is a ground action $a$  whose preconditions hold in $s$
and whose effects transform $s$ into  $s'$. A state trajectory $s_0, \ldots, s_n$
is possible in $P$ if every transition $(s_i,s_{i+1})$ is possible in $P$,
and it is goal-reaching if $s_n$ is a goal state.  An action sequence $a_0, \ldots, a_{n-1}$
that gives rise to a goal-reaching  trajectory, i.e., where transition
$(s_i,s_{i+1})$ is enabled by ground action $a_i$, is called a plan or solution for $P$.

\section{Generalized Planning}

A key question in generalized planning is how to represent general plans or
policies when the different instances to be solved have different sets of
objects and ground actions. One solution is to work with general features (functions)
that have well defined values over any state of any possible domain instance,
and think of general policies $\pi$ as mappings from feature valuations
into \emph{abstract actions} that denote changes in the feature values \cite{bonet:ijcai2018}.
In this work, we build on this intuition but avoid the introduction of
abstract actions \cite{bonet:width}.

\subsection{Policy Language and Semantics}

The \textbf{features} considered are boolean and numerical. The first are denoted
by letters like $p$, and their (true or false) value in a state $s$ is denoted as $p(s)$.
Numerical features $n$ take non-negative integer values, and their value in a state is denoted as
$n(s)$. The complete set of features is
denoted as $\Phi$ and a joint valuation over all the features in $\Phi$
in a state $s$ is denoted as $\phi(s)$, while an arbitrary valuation as $\phi$.
The expression $\boolcast{\phi}$ denotes the boolean counterpart of $\phi$; i.e.,
$\boolcast{\phi}$ gives a truth value to all the atoms $p(s)$ and $n(s)=0$ for features
$p$ and $n$ in $\Phi$,
without providing the exact value of the numerical features $n$ if $n(s) \neq 0$.
The number of possible \textbf{boolean feature valuations} $\boolcast{\phi}$
is equal to $2^{|\Phi|}$, which is a fixed number, as the set of features $\Phi$
does not change across instances.

The possible \textbf{effects} $E$ on the features in $\Phi$ are $p$ and $\neg p$ for boolean features $p$ in $E$,
and $n\mminus$ and $n\pplus$ for numerical features $n$ in $E$. If $\Phi=\{p,q,n,m,r\}$
and $E=\{p,\neg q, n\pplus, m\mminus\}$, the meaning of the effects in $E$ is that
$p$ must become true, $q$ must become false, $n$ must increase its value, and $m$
must decrease it. The features in $\Phi$ that are not mentioned in $E$,
like $r$, keep their values.
A set of effects $E$ can be thought of as a set of constraints on possible state transitions:

\begin{definition}
 Let $\Phi$ be a set of features over a domain $D$, let $(s,s')$ be a state transition
 over an instance $P$ of $D$, and let $E$ be a set of effects over the features in $\Phi$.
 Then the transition $(s,s')$ is \textbf{compatible with} or \textbf{satisfies} $E$
 when 1)~if $p$ ($\neg p$) in $E$, then $p(s')=true$ (resp. $p(s')=false$),
 2)~if $n\mminus$ ($n\pplus$) in $E$, then $n(s) > n(s')$ (resp. $n(s) < n(s'))$, and
 3)~if $p$ and $n$ are not mentioned in $E$, then $p(s)=p(s')$, and $n(s)=n(s')$ respectively.
\end{definition}

\noindent The {form}  of the general policies considered in this work can then be defined as follows:

\begin{definition}
  A \textbf{general policy}  $\pi_\Phi$ is given by a set of \textbf{rules} $C  \mapsto E$
  where $C$ is a set (conjunction)  of $p$ and $n$ literals for  $p$ and $n$ in $\Phi$,
  and $E$ is  an effect expression. 
\end{definition}

The $p$ and $n$-literals are $p$, $\neg p$, $n{=}0$, and $\neg (n{=}0)$, abbreviated as $n{>}0$.
For a reachable state $s$,  the policy $\pi_\Phi$ is a  filter on  the state transitions $(s,s')$ in  $P$:

\begin{definition}
  A  {general policy} $\pi_\Phi$ \textbf{denotes}  a  mapping from state transitions $(s,s')$ over instances $P \in \Q$ into boolean values.
  A transition $(s,s')$ is \textbf{compatible} with $\pi_\Phi$ if  for some policy rule $C \mapsto E$, 
   $C$ is true in $\phi(s)$ and $(s,s')$ satisfies  $E_i$.
\label{def:compatible}
\end{definition}

\noindent As an illustration of these  definitions, we consider a   policy for achieving the
goals $clear(x)$ and an empty gripper in any Blocksworld instance with a block $x$. 

\smallskip
\begin{example}
  Consider the policy $\pi_\Phi$ given by the following two rules for features
  $\Phi{=}\{H,n\}$, where $H$ is true if a block is being held, and $n$ tracks the number of blocks above $x$:
  \begin{equation}
    \{\neg  H, n > 0\} \mapsto \{H, n\mminus\} \ \ ; \ \ \{H, n > 0\} \mapsto \{\neg H\} \ .
  \label{pi:clear}
  \end{equation}
  \noindent The first rule  says that  when the gripper is empty and there are blocks above $x$,
  then any action that decreases $n$ and makes $H$ true should be selected. The   second one   says
  that when the gripper is not empty and there are blocks above $x$,  any action that  makes $H$ false and
  does not affect the count $n$ should be selected  (this  rules out placing the block being held above $x$,
  as this would increase $n$).
\end{example}

\medskip

The conditions under which a general policy solves a class of problems are the following:

\begin{definition}
  A state trajectory $s_0, \ldots, s_n$ is \textbf{compatible}  with policy $\pi_\Phi$ in an  instance $P$ if $s_0$ is the initial state of $P$ and
  each pair $(s_i,s_{i+1})$ is a possible state transition in $P$ compatible with $\pi_\Phi$. The trajectory is \textbf{maximal}
  if $s_n$ is a goal state, there are no state transitions $(s_n,s)$ in $P$ compatible with $\pi_\Phi$,
  or the trajectory is infinite and does not include a goal state.
\end{definition}

\begin{definition}
  A  general policy $\pi_\Phi$ \textbf{solves}  a class $\Q$ of instances over domain $D$
  if in  each instance $P \in \Q$, all maximal state trajectories compatible with $\pi_\Phi$ reach a goal state. 
\end{definition}

The policy expressed by the rules in~\eqref{pi:clear} can be shown to solve  the class $\Q_{clear}$ of all Blocksworld instances.

\subsection{Non-deterministic Policy Rules}

The general policies  $\pi_\Phi$ introduced above  determine the actions $a$ to
be taken in a state $s$ \emph{indirectly}, as the actions $a$ that result
in state transitions $(s,s')$  that are compatible with a policy rule  $C \mapsto E$.
If there is a single rule body $C$ that is true in $s$, for the transition  $(s,s')$
to be compatible with $\pi_\Phi$, $(s,s')$ must satisfy the effect $E$. Yet, it is  possible that the bodies $C_i$ of
many rules $C_i \mapsto E_i$ are true in  $s$,  and then for  $(s,s')$ to be compatible
with $\pi_\Phi$  it suffices if $(s,s')$ satisfies one of the effects $E_i$.

For convenience, we abbreviate sets of rules $C_i \mapsto E_i$, $i=1, \ldots, m$,
that have the same body $C_i=C$, as $C \mapsto E_1\, | \, \cdots\,  |\, E_{m}$,
and refer to the latter as a \textbf{non-deterministic rule}. The non-determinism
is on the effects on the features: one effect $E_i$ may increment a feature $n$,
and another effect $E_j$ may decrease it, or leave it unchanged (if $n$ is
not mentioned in $E_j$). Policies $\pi_\Phi$ where all pairs of rules $C \mapsto E$
and $C' \mapsto E'$ have bodies $C$ and $C'$ that are jointly inconsistent
are said to be \textbf{deterministic}.  Previous formulations that cast
general policies as  mappings from  feature conditions into abstract (QNP) actions
yield policies that are deterministic in this way \cite{bonet:ijcai2018,bonet:aaai2019}. 
Non-deterministic policies, however, are strictly more expressive.

\Omit{
In principle, general policies with deterministic rules can express solutions
to any collection of problems. For this, the  policy $\pi_\Phi$ with the single feature $\Phi=\{V^*\}$ that
represents the optimal cost-to-go to solve the problem, and the single rule $\{V^* > 0\} \mapsto\{ V^*\mminus\}$
that says to move closer to the goal,  solves any solvable problem.
However, we are interested  in learning and using  meaningful features that can be computed efficiently,
in time that is linear in the size of the (ground) problems $P$.
More relevant, though, is that there are domains and sets of of features
that support non-deterministic policies but not deterministic ones.
An example before proceeding  highlights this difference.
}

\smallskip
\begin{example}
  Consider a domain \textbf{Delivery} where a truck has to pick up $m$ packages spread on a grid,
  while taking them, one by one, to a single target cell $t$.
  If we consider the collection of instances with one package only,
  call them  \textbf{Delivery-1}, a general policy $\pi_\Phi$  for them   can be expressed using the
  set of features $\Phi=\{n_p,n_t,C,D\}$, where $n_p$ represents  the distance from the agent to the package
  ($0$ when in the same cell or when holding the package), $n_t$ represents the distance from the agent to the target cell,
  and $C$  and $D$ represent that the package is carried and  delivered respectively. One  may be tempted to write
  the policy $\pi_\Phi$ by means of the four deterministic rules:
  \begin{alignat*}{1}
    & r_1: \, \{\neg C, n_p{>}0\} \mapsto  \{n_p\mminus\}  \ ; \  r_2: \, \{\neg C, n_p{=}0\} \mapsto \{C\}  \\
    & r_3:\, \{C, n_t{>}0\} \mapsto \{n_t\mminus\} \ ;   \ r_4: \, \{C, n_t{=}0\} \mapsto \{\neg C, D\} \, .
  \end{alignat*}
  The rules say ``if away from the package,  get closer'',
  ``if don't have the package but in the same cell, pick it up'',
  ``if carrying the package and away from target, get closer  to target'',
  and ``if carrying the package in target cell, drop the package''.
  This policy, however, does not solve \textbf{Delivery-1}.
  The reason is that transitions $(s,s')$ where the agent gets closer to the package
  satisfy the conditions $\neg C$ and $n_p  >0$ of rule $r_1$ but may fail to satisfy its head $\{n_p\mminus\}$.
  This is because the actions that decrease the distance $n_p$ to the package
  may affect the distance $n_t$ of the agent to the target, contradicting $r_1$,
  which says that $n_t$ does not change. To solve \textbf{Delivery-1} with the same
  features, rule $r_1$ must be changed to the non-deterministic rule:
  \[ r'_1 : \, \{\neg C, n_p > 0\} \, \mapsto \,  \{n_p\mminus , n_t\mminus\} \, | \, \{n_p\mminus , n_t\pplus\} \, | \, \{n_p\mminus\}, \]
  which says indeed  that ``when away from the package, move closer to the package for any possible
  effect on the distance $n_t$ to the target, which may decrease, increase, or stay the same.''
  We often abbreviate rules like $r'_1$ as $\{\neg C, n_p > 0\} \mapsto  \{n_p\mminus, n_t?\}$, where  $n_t?$ expresses ``any effect on $n_t$.''
\end{example}

\Omit{
\subsection{Proving Correctness}

Building on the similarity between the general policies $\pi_\Phi$ and policies that solve QNPs,
it is possible to state powerful structural conditions under which a policy will solve an \textbf{infinite} collections
of problems $\Q$. The conditions are  the following: 1)~the features in $\Phi$  \textbf{separate goal from non-goal states},
2)~the \textbf{policy rules are sound}, and 3)~the policy graph associated with $\pi_\Phi$ is \textbf{strongly cyclic and terminating}.

The features separate goals from non-goals iff there are some boolean feature valuations,
called \textbf{goal valuations}, that are true in and only in the goal states of the instances in $\Q$.
Likewise, a policy is sound if all its rules are sound, and a
rule $C \rightarrow E_1 \, | \, \cdots \,  | \, E_{m}$ is \textbf{sound}
iff for any state $s$ over an instance $P \in \Q$ such that $C$ is true in $\phi(s)$,
there is a state transition $(s,s')$ in $P$ that satisfies one or more of the effects $E_i$.
Finally,  the policy graph  $G(\pi_\Phi)$  associated with the policy $\pi_\Phi$  has   nodes that are  boolean feature valuations  $\sigma$
(at most $2^{|\Phi|}$ of them), and  directed edges $(\sigma,\sigma')$  if there is a policy rule
$C \mapsto E_1 \, | \,  \cdots \, | \, E_m$ such that $\sigma$ satisfies $C$ and  one of the
effects $E_i$ can map $\sigma$ into $\sigma'$.\footnote{This is like for QNPs. The transition $(\sigma,\sigma')$ is compatible with $E_i$
iff 1)~$p$ becomes true (false) if $p$ (resp. $\neg p$) in $E_i$,  and 2)~$n=0$  becomes true (false)  if $n\mminus \in E_i$
(resp. $n\pplus \in E_i$), 3)~$p$ keeps its value if not mentioned in $E_i$, and 4)~$n=0$ keeps its value if $n$ not mentioned in $E_i$
or $n>0$ is true in both $\sigma$ and $\sigma'$.}
The  policy graph is \textbf{strongly cyclic}   \cite{cimatti:strong-cyclic} if
every node in the graph is connected to a goal node, and it  is \textbf{terminating} if
it it cannot be trapped in a loop forever. Both properties can be checked in low  polynomial
time in the size of the graph; the latter one by a slight variant of the \textsc{Sieve} algorithm
for QNPs \cite{sid:aaai2011,bonet:qnps}.\footnote{\textcolor{red}{*** See if more about Sieve needs to be said here or appendix. ***}}
\Omit{
In the context of QNPs, a policy graph is terminating if it's detected as terminating
by the algorithm \textsc{Sieve} \cite{sid:aaai2011} that runs in time polynomial in the
size of the policy graph. Basically, Sieve removes edges from the policy graph iteratively,
and the policy is terminating if the resulting graph ends up being acyclic. The edges
$(\sigma,\sigma')$ that can be removed from the (remaining) graph are those associated
with effects $E_i$ where some feature $n$ is decremented, i.e., $n\mminus \in E_i$,
such that there is no path in the graph back from $\sigma'$ to $\sigma$
where the feature $n$ is incremented; i.e., with  an edge with effect $E_j$
such that $n\pplus \in E_j$. The reason for this is that the features, from our definition,
cannot take negative values. Indeed, in our current setting, features will be linear\footnote{
A number of possible values linear in the number of problem atoms, and
time for computing such values linear in this sense too.}, hence, it is also
true that cycles where a feature value is increased cannot go on forever,
without being decreased as well, and hence, we use a slight variant of Sieve
where edges $(\sigma,\sigma')$ that can be removed also when they are associated
with effects $E_i$ where some feature $n$ is increased, i.e., $n\pplus \in E_i$,
such that there is no path in the graph back from $\sigma'$ to $\sigma$
where the feature $n$ is decreased.
}
The result of these three properties is captured by the following theorem:

\begin{theorem}
  Let $\pi_\Phi$ be a general policy for a class $\Q$ of instances over a   domain $D$.
  If 1)~the features  \textbf{separate goal from non-goal states}, 2)~the \textbf{policy rules are sound}, and 3)~the policy graph $G(\pi_\Phi)$
  is \textbf{strongly cyclic and terminating}, then $\pi_\Phi$ \textbf{solves} all the instances in $\Q$.
  \label{thm:qnp}
\end{theorem}

The theorem is useful to prove the correctness of general policies over infinite sets of instances
and follows a similar  one  in \cite{bonet:ijcai2018}.

\medskip

\begin{example}
  \color{red} **  Show that policy (\ref{pi:clear}) solves  $\Q_{clear}$ by proving each one of the properties; using theorem.
  Include picture with policy graph. **
\end{example}

}

\section{Learning General Policies: Formulation}

We turn now to the key challenge:  \textbf{learning} the features $\Phi$ and general  policies  $\pi_\Phi$
from \textbf{samples} $P_1, \ldots, P_k$ of  a  target class of problems  $\Q$, given the  domain $D$.
The {learning task} is formulated as follows. From the predicates used in $D$
and a fixed grammar, we generate a \textbf{large pool $\F$ of boolean and
numerical features} $f$, like in \cite{bonet:aaai2019}, each of which is
associated with a measure $w(f)$ of syntactic complexity.
We then \emph{search for the simplest set of features
$\Phi \subseteq \mathcal{F}$ such that a policy $\pi_\Phi$ defined on
$\Phi$  solves all sample instances $P_1, \ldots, P_k$.}
This task is formulated as a Weighted \msat{} problem over a
suitable propositional theory $T$, with score $\sum_{f \in \Phi} w(f)$
to minimize.

This learning scheme is \textbf{unsupervised} as the  sample instances do not come with their plans.
Since the sample instances are assumed to be sufficiently small (small state spaces)
this is not a crucial issue, and by  letting the learning algorithm choose which plans
to generalize, the resulting approach becomes more flexible. In particular,
if we ask for the policy $\pi_\Phi$ to generalize  given plans as in \cite{bonet:aaai2019},
it may well happen that there are policies in the feature space but none of which
generalizes the plans provided by the teacher.

We next describe the propositional theory $T$
assuming that the feature pool $\F$ and the feature weights $w(f)$ are given,
and then explain how they are generated.
Our SAT formulation is  different from \cite{bonet:aaai2019} 
as it is aimed at capturing a more expressive class of  policies  without
requiring QNP planners.

\subsection{Learning the General Policy as Weighted  \msat{}}

The propositional theory $T=T(\S,\F)$ that captures our learning task
takes as inputs the pool of features $\F$ and the state space $\S$ made up of the (reachable) states $s$,
the possible state transitions $(s,s')$,
and the sets of (reachable) goal states in each of the sample problem instances
$P_1, \ldots, P_n$.
The handling of dead-end states is explained below.
States arising from the different instances are assumed to be different even
if they express the same set of ground atoms. 
The propositional variables in $T$ are
\begin{itemize}
  \item $Select(f)$: feature $f$ from pool $\F$ makes it into $\Phi$,
  \item $Good(s,s')$: transition $(s,s')$ is compatible with $\pi_\Phi$,
  \item $V(s,d)$: num. labels $V(s)=d$, $V^*(s) \leq d \leq \delta  V^*(s)$.
\end{itemize}

The true atoms $Select(f)$ in the satisfying assignment define the features $f \in \Phi$,
while the true atoms $Good(s,s')$, along with the selected features, define the policy $\pi_\Phi$.
More precisely, there is a rule $C \mapsto E_1 \, | \, \cdots \, | E_m$ in the policy iff
for each effect $E_i$, there is a true atom $Good(s,s_i)$ for which $C = \boolcast{\phi(s)}$,
and $E_i$ captures the way in which the selected features change across  the transition $(s,s_i)$.
The formulas in the theory use numerical labels $V(s)=d$, for $V^*(s) \leq d \leq \delta  V^*(s)$
where $V^*(s)$ is the minimum distance from $s$ to a goal, and $\delta \geq 1$ is a \emph{slack parameter} that controls
the degree of suboptimality that we allow. All experiments in this paper use $\delta=2$.
These values are used to ensure that the policy determined by the $Good(s,s')$ atoms
solves all instances $P_i$ as well as all instances $P_i[s]$ that are like $P_i$
but with $s$ as the initial state, where $s$ is a state reachable in $P_i$
and is not a dead-end.
We call the $P_i[s]$ problems \textbf{variants} of $P_i$.
Dead-ends are states from which the goal cannot be reached,
and they are labeled as such in $\S$.

The formulas are the following.
States $s$ and $t$, and transitions $(s,s')$ and $(t,t')$
range over those in $\S$, excluding transitions where the first state
of the transition is a dead-end or a goal.
$\Delta_f(s,s')$ expresses  how feature $f$ changes across transition $(s,s')$:
for boolean features, $\Delta_f(s,s') \in \{\uparrow, \downarrow, \bot \}$, meaning that $f$ changes
from false to true,
from true to false,
or stays the same. For numerical features,
$\Delta_f(s,s')  \in \{\uparrow, \downarrow\, \bot\}$, meaning that $f$ can increase,
decrease, or stay the same.
The formulas in $T=T(\S,\F)$ are:

\begin{enumerate}[1.]
  \item Policy: $\bigvee_{(s,s')} Good(s,s')$, $s$ is non-goal state,
  \item $V_1$.: $\text{Exactly-1}\, \{V(s,d): V^*(s) \leq d \leq \delta V^*(s)\}$,\footnote{
This implies that $V(s,0)$ iff $s$ is a goal state.
}
  \item $V_2$: $Good(s,s') \rightarrow V(s,d) \land V(s',d')$,  $d' < d$,
  \item Goal: $\bigvee_{f: \boolcast{f(s)} \not= \boolcast{f(s')}} \  Select(f)$,  one $\{s,s'\}$ is goal,
  \item Bad trans: $\neg Good(s,s')$ for $s$ solvable, and $s'$ dead-end,
  \item D2-sep: $Good(s, s') \land \neg Good(t, t') \rightarrow D2(s,s'; t,t')$, where $D2(s,s';t,t')$ is $\bigvee_{\Delta_f(s,s') \not= \Delta_f(t,t')}  Select(f)$.
\end{enumerate}

The first formula asks for a good transition from any non-goal state $s$.
The good transitions are transitions that will be compatible with the policy.
The second and third formulas ensure that these good transitions lead to a goal state, and
furthermore, that they can capture any non-deterministic policy that does so. 
The fourth formulation is about separating goal from non-goal states,
and the fifth is about excluding transitions into dead-ends.
Finally, the D2-separation formula says that if $(s,s')$ is a ``good'' transition (i.e., compatible with
the resulting policy $\pi_\Phi$), then any other transition $(t,t')$ in
$\cal S$ where the selected features change exactly  as in $(s,s')$
must be ``good'' as well. $\Delta_f(s,s')$ above captures
how  feature $f$ changes across the transition $(s,s')$,
and the selected features $f$ change in the same way in $(s,s')$
and $(t,t')$ when $\Delta_f(s,s')=\Delta_f(t,t')$.

The propositional encoding is \textbf{sound} and \textbf{complete} in the following sense:


\begin{theorem}
Let $\S$ be the state space associated with a set $P_1, \ldots, P_k$ of
sample instances of a class of problems $\Q$ over a domain $D$,
and let $\F$ be a pool of features.
The theory $\tsf$ is \textbf{satisfiable} iff
there is a general policy $\pi_\Phi$ over features $\Phi \subseteq \F$
that discriminates goals from non-goals and solves $P_1$, \ldots, $P_k$ and
their variants.
\label{thm:sat}
\end{theorem}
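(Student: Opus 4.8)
The plan is to prove the two implications separately, with a single lemma carrying most of the weight in both. The key lemma is that whenever the D2-separation formula~(6) holds, the transitions marked $\Good{s,s'}$ are \emph{exactly} the transitions compatible with the policy $\pi_\Phi$ read off from the assignment. I would prove it by first checking that, once the source boolean valuation is fixed (i.e. $\boolcast{\phi(t)}=\boolcast{\phi(s)}$), a transition $(t,t')$ satisfies the effect $E$ extracted from a good transition $(s,s')$ if and only if every selected feature changes in $(t,t')$ exactly as in $(s,s')$, that is $\Delta_f(t,t')=\Delta_f(s,s')$ for all $f\in\Phi$. This is a short case analysis over the three values of $\Delta_f$ using Definition~1 and the fact that the body fixes, for each feature, whether it is zero/positive or true/false at the source. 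Consequently a compatible transition shares its change-signature with some good transition, and D2-separation forces it to be good as well; the converse is immediate by taking $(s,s')=(t,t')$. The second ingredient is to read the $V$-labels as a termination measure: formula~(2) makes $V(s)$ a well-defined integer in $[V^*(s),\delta V^*(s)]$ with $V(s)=0$ iff $s$ is a goal, and formula~(3) makes $V$ strictly decrease along every good transition.

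For the soundness direction (satisfiable $\Rightarrow$ a solving policy exists), I would fix a satisfying assignment, set $\Phi=\set{f:\Sel{f}}$, and let $\pi_\Phi$ be the induced policy. Goal discrimination follows from formula~(4): for every goal/non-goal pair some selected feature differs in boolean value, so the goal valuations are disjoint from the non-goal valuations. To show $\pi_\Phi$ solves each $P_i$ and each variant $P_i[s]$, I would take an arbitrary maximal compatible trajectory from a reachable non-dead-end state $s_0$. By the lemma every transition on it is good, so formula~(3) forces $V(s_0)>V(s_1)>\cdots$; being non-negative integers, the trajectory is finite. Formula~(5) guarantees good transitions never enter a dead-end, so by induction every state on the trajectory is solvable; hence the last state has no compatible successor only if it is a goal, since formula~(1) supplies a good (hence compatible) successor for every solvable non-goal state. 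Thus every maximal compatible trajectory reaches a goal, which is exactly the definition of solving.

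For the completeness direction (a solving policy exists $\Rightarrow$ satisfiable), I would start from a policy $\pi_\Phi$ over $\Phi\subseteq\F$ that discriminates goals and solves all $P_i$ and their variants, set $\Sel{f}$ true exactly for $f\in\Phi$, and take the good transitions to be the policy-compatible transitions out of solvable non-goal states. Formula~(4) holds because $\Phi$ discriminates goals; formula~(5) holds because a solving policy can never make a compatible move into a dead-end, as otherwise some maximal trajectory would be trapped; and formula~(1) holds because a solving policy leaves every solvable non-goal state through at least one compatible transition. For the $V$-labels I would use a descending measure derived from the policy's guaranteed termination (the states being finite and loop-free under $\pi_\Phi$), deferring to below the delicate point of keeping this measure inside the window $[V^*(s),\delta V^*(s)]$.

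The hard part will be the completeness direction, precisely in reconciling two requirements that soundness gets for free. First, the $V$-labels must lie in $[V^*(s),\delta V^*(s)]$, so I must produce a \emph{$\delta$-bounded} descending measure; this forces me to argue that the good set can be chosen so that the goal is reached from every state within a factor $\delta$ of optimal, and to make explicit that ``solves its variants'' is to be read with this slack. Second, formula~(6) demands that the good set be \emph{closed under change-signatures}: any transition whose selected features change exactly like a good transition must itself be good. An arbitrary rule-based policy need not have this closure, since two transitions with the same $\Delta_f$ signature may start from boolean valuations the policy treats differently. I expect the crux to be showing that whenever $\Phi$ admits a solving policy at all it admits one whose good set is signature-closed---either by selecting additional features from $\F$ that split the offending valuations through their own change-signatures, or by replacing the good set with its signature-closure and verifying, via the descending $V$-labels, that the enlarged policy still solves. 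Everything else reduces to routine bookkeeping enabled by the good $=$ compatible lemma.
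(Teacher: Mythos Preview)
Your overall decomposition and your key lemma match the paper's approach closely. The soundness direction (satisfiable $\Rightarrow$ policy) is essentially identical: extract $\Phi$ and $\pi_\Phi$ from the assignment, use the $V$-labels as a strictly decreasing integer measure along good transitions, and invoke formula~(1) for progress from every solvable non-goal state. Your ``good $=$ compatible'' lemma is exactly what the paper relies on in this direction; the paper phrases it as ``the edge $(s,s')$ is in $\S_\pi$ iff \ldots $(s,s')$ is compatible with $\pi$; equivalently, $\sigma\vDash\Good{s,s'}$'' and appeals to formula~(6), but does not spell out the case analysis you propose. Your argument there is correct.

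Where you diverge is that you are \emph{more} careful than the paper in the completeness direction. The paper sets $\Good{s,s'}$ true iff $(s,s')$ is compatible with the given $\pi$, takes $V(s)$ to be the length of the longest $\pi$-path from $s$ to a goal (well-defined since the $\pi$-subgraph is acyclic), and then checks formulas (1)--(6). For formula~(2) it writes only ``There is $\delta$ such that $V^*(s)\leq V_\pi(s)\leq\delta V^*(s)$'', effectively treating $\delta$ as chosen \emph{a posteriori} rather than as a fixed parameter of the theory; your worry about fitting the labels into the window $[V^*(s),\delta V^*(s)]$ is therefore well-founded and is not something the paper actually resolves. For formula~(6) the paper asserts that if $(s,s')$ and $(t,t')$ share their $\Delta$-signature over $\Phi$ and $(s,s')$ is compatible with $\pi$, then so is $(t,t')$. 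But, as you point out, this fails for arbitrary rule-based policies: with the single rule $\{p\}\mapsto\{n\mminus\}$, a transition with signature $(\Delta_p{=}\bot,\Delta_n{=}\downarrow)$ from a $p$-state is compatible while one with the same signature from a $\neg p$-state is not. So the signature-closure issue you flag as ``the crux'' is a genuine gap that the paper's proof does not close; you are not missing a trick, and neither of your two proposed repairs (enlarging $\Phi$, or taking the signature closure and re-verifying descent) is attempted in the paper.
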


For the purpose of generalization outside of the sample instances, instead  of looking for \textbf{any} satisfying
assignment of the theory $T({\cal S},{\cal F})$, we   look for the  satisfying  assignments that \textbf{minimize}
the complexity of the resulting policy, as measured by the sum of the costs $w(f)$ of the clauses $Select(f)$ that
are true,  where $w(f)$ is the complexity of feature $f \in {\cal F}$.

We sketched above how a  general policy $\pi_\Phi$ is extracted from  a satisfying assignment.
The only thing missing is the   precise meaning of the line  ``$E_i$ captures the way in which
the selected features change in the transition from $s$ to $s_i$''.
For this, we look at the value of the expression  $\Delta_f(s,s_i)$
computed at preprocessing, and place $f$ ($\neg f$) in $E_i$ if $f$ is boolean and $\Delta_f(s,s_i)$ is `$\uparrow$' (resp. $\downarrow$),
and place $f\pplus$ ($f\mminus$) in $E_i$ if $f$ is numerical and $\Delta_f(s,s_i)$ is `$\uparrow$' (resp. $\downarrow$).
Duplicate effects $E_i$ and $E_j$ in a policy rule are merged. The resulting policy
delivers the properties of Theorem~\ref{thm:sat}:

\begin{theorem}
  The policy $\pi_\Phi$ and features $\Phi$ that  are determined by  a satisfying
  assignment of the theory $T$
  solves the sample problems $P_1, \ldots, P_k$ and their variants.
\label{thm:several-initial-states}
\end{theorem}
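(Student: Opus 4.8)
The plan is to reduce the claim to a single invariant relating the atoms $Good(s,s')$ of the satisfying assignment to the induced policy, and then to run a potential-function argument. Recall that $\pi_\Phi$ solves an instance iff every maximal trajectory compatible with $\pi_\Phi$ reaches a goal, and that solving $P_i$ together with all its variants $P_i[s]$ amounts to showing that from every reachable non-dead-end state $s$ of every $P_i$, each maximal compatible trajectory ends in a goal. Since any state reachable from such an $s$ is also reachable from the original initial state of $P_i$, every state and transition visited by such a trajectory lies in $\S$, so the six families of clauses of $T=\tsf$ apply to them. First I would fix a satisfying assignment, set $\Phi=\{f : \Sel{f}\ \text{true}\}$, and let $\pi_\Phi$ be the policy read off from the true $Good$ atoms as described before the statement.

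The technical heart is the following equivalence, which I would isolate as a lemma: for a transition $(s,s')\in\S$ whose source $s$ is a solvable non-goal state, $(s,s')$ is compatible with $\pi_\Phi$ (Definition~\ref{def:compatible}) iff $Good(s,s')$ is true in the assignment. The easy direction ($Good(s,s')$ implies compatibility) is immediate from the construction: a true $Good(s,s')$ contributes a rule whose body is $C=\boolcast{\phi(s)}$ and whose effect $E$ records exactly the changes $\Delta_f(s,s')$ of the selected features, so $C$ holds at $s$ and $(s,s')$ satisfies $E$ by definition. The converse (compatibility implies $Good(s,s')$) is the main obstacle. If $(s,s')$ is compatible it satisfies some effect $E_i$ of a fired rule, where $E_i$ was contributed by a good transition $(u,u_i)$ whose source boolean valuation equals the rule body; since that body holds at $s$ we obtain $\boolcast{\phi(s)}=\boolcast{\phi(u)}$. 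I would then argue, case-splitting on boolean versus numerical features, that equality of the source boolean valuations together with $(s,s')$ satisfying $E_i$ pins down $\Delta_f(s,s')=\Delta_f(u,u_i)$ for every selected $f$ (e.g. for a boolean $f$ with $f\in E_i$ one has $f(u)=\false$, hence $f(s)=\false$ and $f(s')=\true$, i.e. an $\uparrow$ on both sides). With all selected features changing identically on $(u,u_i)$ and $(s,s')$, the D2-separation clause instantiated at these two transitions has a false right-hand side, and since $Good(u,u_i)$ is true it forces $Good(s,s')$. The care needed in this case analysis, and in checking that both transitions lie in the clause's scope, is where I expect the real work to be.

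With the equivalence in hand the remainder is routine. I would first show by induction along any compatible trajectory $s_0=s,s_1,\dots$ that every $s_j$ is solvable: $s_0$ is non-dead-end by hypothesis, and the Bad-trans clauses forbid a good (hence compatible) transition from a solvable state into a dead-end, so solvability propagates and every visited transition stays in the scope of the clauses. Next, compatibility yields $Good(s_j,s_{j+1})$ by the equivalence, so the $V_2$ clauses force $V(s_{j+1})<V(s_j)$ while the $V_1$ clauses assign each state a unique non-negative integer label; a strictly decreasing sequence of non-negative integers is finite, so the trajectory terminates at some $s_N$. Finally, if $s_N$ were a solvable non-goal state the Policy clause would supply a true atom $Good(s_N,s')$, which the easy direction turns into a compatible outgoing transition, contradicting maximality; hence $s_N$ is a goal. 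The Goal clauses guarantee that goals and non-goals receive distinct boolean valuations, so no rule body fires at a goal and goals are genuinely terminal under $\pi_\Phi$. Every maximal compatible trajectory therefore reaches a goal, which is precisely the statement that $\pi_\Phi$ solves each $P_i$ and each variant $P_i[s]$.
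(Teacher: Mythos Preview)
Your proposal is correct and follows essentially the same approach as the paper's proof: both hinge on the equivalence ``$Good(s,s')$ is true iff $(s,s')$ is compatible with $\pi_\Phi$'' (the paper states this equivalence in one line as ``equivalently, $\sigma\vDash\Good{s,s'}$'', while you spell out the case analysis that establishes it), and both then use the $V$-labels as a strictly decreasing potential to rule out infinite trajectories and the Policy clause to force termination at a goal. The only cosmetic difference is that the paper phrases the termination argument graph-theoretically (the subgraph $\S_\pi$ is acyclic and every non-dead-end node reaches a goal) rather than as a trajectory argument, but the content is identical.
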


\section{Feature Pool}

The feature pool $\F$ used in the theory  $T(\S,\F)$
is obtained following the method described by \citeay{bonet:aaai2019},
where the (primitive) domain predicates are combined through a standard description logics
grammar~\cite{dl-handbook} in order to build a larger set of (unary) concepts $c$ and (binary) roles $r$.
Concepts represent \emph{properties} that the objects of any problem instance can fulfill in a state,
such as the property of being a package that is in a truck on its target location in a standard logistics problem.
For primitive predicates $p$ mentioned in the goal,
a ``goal predicate'' $p_G$ is added that is evaluated
not in the state but in the goal, following \cite{martin:generalized}.

From these concepts and roles, we generate \emph{cardinality features} $\abs{c}$,
which evaluate to the number of objects that satisfy concept $c$ in a given state,
and \emph{distance features} $Distance(c_1, r, c_2)$, which evaluate to the minimum number of $r$-steps between
two objects that (respectively) satisfy $c_1$ and  $c_2$.
We refer the reader to the appendix for more detail. 
Both types of features are lower-bounded by $0$ and upper-bounded by the total number of objects in the problem instance.
Cardinality features that  only take values in $\set{0,1}$ are made into boolean features.
The complexity $w(f)$ of feature $f$ is given by the size of 
its syntax tree. The feature pool $\F$ used in the experiments below contains all features up to a certain complexity
bound $k_\F$.

\section{Experimental Results}
We implemented the proposed approach in  a \CC/Python system called \dtwol{}
and evaluated it on several problems.
Source code and benchmarks are available online\footnote{
\url{https://github.com/rleap-project/d2l}.
}
and archived in Zenodo \cite{d2l2021zenodo}.
Our implementation uses the Open-WBO Weighted \msat{} solver \cite{martins2014open}.
All experiments were run on an Intel i7-8700 CPU@3.2GHz with a 16 GB memory  limit.

The domains include all problems with simple goals from~\cite{bonet:aaai2019},
e.g.\ clearing a block or stacking two blocks in Blocksworld, plus standard PDDL domains
such as Gripper, Spanner, Miconic, Visitall and Blocksworld.
In all the experiments, we use $\delta=2$ and $k_\F=8$, except in Delivery, where
$k_\F=9$ is required to find a policy.
We next describe two important optimizations.

\smallpar{Exploiting indistinguishability of constraints}
A fixed feature pool $\F$ induces an equivalence relation over the set of all transitions in the training sample
that puts two transitions in the same equivalence class iff they cannot be distinguished by $\F$.
The theory $\tsf$ above can be simplified by arbitrarily choosing one transition $(s,s')$ for each of these
equivalence classes, then using a single SAT variable $Good(s,s')$ to denote the goodness of any transition in the class
and to enforce the D2-separation clauses.

\smallpar{Incremental constraint generation}
Since the number of D2-separation
constraints in the theory \tsf{} grows quadratically with the number of
equivalence classes among the transitions, we use a \emph{constraint generation
loop} where these constraints are enforced incrementally.
We start with a set $\tau_0$ of pairs of transitions $(s,s')$ and $(t,t')$ that
contains all pairs for which $s=t$ plus some random pairs from $\S$.
We obtain the theory $\tsfi{0}$ that is like $\tsf$ but where the D2-separation
constraints are restricted to pairs in $\tau_0$.
At each step, we solve $\tsfi{i}$ and validate the solution to check whether
it distinguishes all good from bad transitions in the entire sample;
if it does not, the offending transitions are added to
$\tau_{i+1} \supset \tau_i$, and the loop continues until
the solution to $\tsfi{i}$ satisfies the D2-separation formulas
for all pairs of transitions in $\S$, not just those in $\tau_i$.

\subsection{Results}

Table~\ref{tab:exp-result-data} provides an overview of the execution of
\dtwol{} over all generalized domains.
The two main conclusions to be drawn from the results are that
1)~our generalized policies are  more expressive and
result in policies that cannot be captured in previous approaches \cite{bonet:aaai2019},
2)~our SAT encoding is also simpler and scales up much better,
allowing to tackle harder tasks with reasonable computational effort.
Also, the new  formulation is unsupervised and complete, in the sense that if there is a general policy in the given
feature space that solves the instances, the solver is guaranteed to find it.
%

In all domains, we use a modified version of the Pyperplan planner\footnote{\url{https://github.com/aibasel/pyperplan}.}
to check  empirically that the learned policies are able to solve a set of test instances of
significantly larger dimensions than the training instances. For standard PDDL domains with readily-available instances (e.g., Gripper, Spanner, Miconic),
the test set  includes all instances in the benchmark set,\footnote{
We have used the benchmark distribution in \url{https://github.com/aibasel/downward-benchmarks}.
} whereas for other domains such as $\Q_{rew}$, $\Q_{deliv}$ or  $\Q_{bw}$, the test set contains at least
$30$ randomly-generated instances.

We next briefly describe the policy learnt by \dtwol{} in each domain;
the appendix contains detailed descriptions and proofs of correctness
for all these policies. 

\begin{table*}[t]
  \newcommand{\ninst}{\ensuremath{\abs{P_i}}} 
  \newcommand{\tset}{\ensuremath{\S}} 
  \newcommand{\tseteq}{\ensuremath{\S/{\scriptstyle\sim}}} 
  \newcommand{\tall}{\ensuremath{t_{\text{all}}}} 
  \newcommand{\tsat}{\ensuremath{t_{\text{SAT}}}} 
  \newcommand{\npool}{\ensuremath{\abs{\F}}} 
  \newcommand{\nfeats}{\ensuremath{\abs{\Phi}}} 
  \newcommand{\sizepi}{\ensuremath{\abs{\pi_\Phi}}} 
  \newcommand{\cphi}{\ensuremath{c_{\Phi}}} 
    \newcommand{\maxk}{\ensuremath{k^*}} 
  \newcommand{\dmax}{\ensuremath{d_{max}}} 
  \newcommand{\dimens}{\ensuremath{dim}} 
  \newcommand{\TK}{\text{K}} 
  \newcommand{\TM}{\text{M}} 

  \centering
  \resizebox{\textwidth}{!}{
    \begin{tabular}{LRRRRRRRRRRRRRRR}
      \toprule
                  &\ninst&  \dimens &&  \tset & \tseteq &\dmax& \npool  &  vars     & clauses         & \tall &\tsat& \cphi &\nfeats & \maxk & \sizepi \\
      \midrule
      \Q_{clear}  &    1 &        5 &&  1,161 &    55 &  7 &    532  &    7.9\TK &  243.7\TK (242.3\TK) &   6 &<1   &      8 &      3 &   4 &    3 \\
      \Q_{on}     &    1 &        5 &&  1,852 &   329 & 10 &   1,412 &   17.3\TK &  376.6\TK (281.5\TK) & 33 & 22   &     13 &      5 &   5 &    7 \\
      \Q_{grip}   &    1 &        4 &&  1,140 &    61 & 12 &    835  &    6.5\TK &  102.6\TK (100.8\TK) &   2 &<1   &      9 &      3 &   4 &    4 \\
      \Q_{rew}    &    1 & 5\times5 &&    432 &   361 & 15 &    514  &    5.5\TK &  214.9\TK  (98.9\TK) &   2 &<1   &      7 &      2 &   6 &    2 \\
      \Q_{deliv}  &    2 & 4\times4 &&  42,473&  5442 & 56 &   1,373 &  753.4\TK &  38.2 \TM  (23.5\TM) & 3071& 2902&     30 &      4 &  14 &    6 \\
      \Q_{visit}  &    1 & 3\times3 &&  2,396 &   310 &  8 &    188  &   13.9\TK &  244.5\TK (160.6\TK) &   3 &<1   &      7 &      2 &   5 &    1 \\
      \Q_{span}   &    3 & (6, 10)  && 10,777 &    96 & 19 &    764  &   85.0\TK &    2.2\TM   (2.2\TM) &  32 &<1   &      9 &      3 &   6 &    2 \\
      \Q_{micon}  &    2 & (4, 7)   &&  4,706 & 4,636 & 14 &  1,073  &   23.8\TK &   23.6\TM   (2.4\TM) & 41  & 61  &     11 &      4 &   5 &    5 \\
      \Q_{bw}     &    2 &        5 &&  4,275 & 4,275 &  8 &  1,896  &   22.1\TK &   9.3\TM  (390.0\TK) & 80  & 40  &     11 &      3 &   6 &    1 \\
      \bottomrule
    \end{tabular}
  }
  \caption{\emph{Overview of results}.
    \ninst{} is number of training instances, and
    \dimens{} is size of largest training instance along main generalization dimension(s): number of blocks
    ($\Q_{clear}, \Q_{on}, \Q_{bw}$),
    number of balls ($\Q_{grip}$),
    grid size ($\Q_{rew},\Q_{deliv}, \Q_{visit}$),
    number of locations and spanners ($\Q_{span}$),
    number of passengers and floors ($\Q_{micon}$).
    We fix $\delta=2$ and $k_\F=8$ in all experiments except $\Q_{deliv}$, where
    $k_\F=9$.
    \tset{} is number of transitions in the training set, and
    \tseteq{} is the number of distinguishable equivalence classes in \tset{}.
    \dmax{} is the max.\ diameter of the training instances.
    \npool{} is size of feature pool.
    ``Vars'' and ``clauses'' are the number of variables and clauses in the (CNF form) of the theory $\tsf$;
    the number in parenthesis is the number of clauses in the last iteration
    of the constraint generation loop.
    \tall{} is total CPU time, in sec., while
    \tsat{} is CPU time spent solving  \msat{} problems.
    \cphi{} is optimal cost of SAT solution,
    \nfeats{} is number of selected features,
    \maxk{} is cost of the most complex feature in the policy,
    \sizepi{}  is number of rules in the resulting policy.
    CPU times are given for the incremental constraint generation approach.
  }
  \label{tab:exp-result-data}
\end{table*}

\smallpar{Clearing a block}
$\Q_{clear}$ is a simplified Blocksworld where the goal is to get $clear(x)$ for a distinguished block $x$.
We use the standard $4$-op encoding with stack and unstack actions.
Any 5-block training instance suffices to compute the following policy over features
$\Phi = \{c, H, n\}$
that denote, respectively, whether $x$ is clear, whether
the gripper holds a block, and the number of blocks above $x$:\footnote{All features
  discussed in this section are automatically derived with the description-logic
  grammar, but we label them manually for readability.}
\begin{alignat*}{1}
&r_1:\, \set{\neg c, H, n=0}        \ \mapsto\ \set{c, \neg H}\,, \\
&r_2:\, \set{\neg c, \neg H, n>0}   \ \mapsto\ \set{c?, H, n\mminus}\,, \\
&r_3:\, \set{\neg c, H, n>0}        \ \mapsto\ \set{\neg H}\,.
\end{alignat*}
Rule $r_1$ applies only when $x$ is held (the only case where $n=0$ and $\neg c$),
and puts $x$ on the table. Rule $r_2$ picks any block above $x$ that can be picked, potentially
making $x$ clear, and $r_3$ puts down block $y \neq x$ anywhere \emph{not} above $x$.
Note that this policy is slightly more complex than the one defined in (\ref{pi:clear}) because the SAT theory
enforces that goals be distinguishable from non-goals, which in the standard encoding cannot be achieved with
$H$ and $n$ alone.

\smallpar{Stacking two blocks}
$\Q_{on}$ is another simplification of Blocksworld where the goal is $on(x,y)$
for two designated blocks $x$ and $y$.
One training instance with 5 blocks yields a policy over features
$\Phi = \{e, c(x), on(y), ok, c\}$. The first four are boolean and encode
whether the gripper is empty, $x$ is clear, some block is on $y$, and $x$ is on $y$;
the last is numerical and encodes the number of clear objects.
This version of the problem is more general than that in \cite{bonet:aaai2019}, where $x$ and $y$ are assumed
to be initially in different towers.


\smallpar{Gripper}
$\Q_{grip}$ is the standard Gripper domain where a two-arm robot
has to move $n$ balls between two rooms $A$ and $B$.
Any 4-ball instance
is sufficient to learn a simple policy with
features
$\Phi = \{r_B, c, b\}$
that denote whether the robot is at $B$,
the number of balls carried by the robot,
and the number of balls not yet left in $B$:
\begin{alignat*}{1}
&r_1:\, \{\neg r_B, c=0, b>0\}\ \mapsto\ \{c\pplus\}\,, \\
&r_2:\, \{r_B, c=0, b>0\}\ \mapsto\ \{\neg r_B\}\,, \\
&r_3:\, \{r_B, c>0, b>0\}\ \mapsto\ \{c\mminus, b\mminus\}\,, \\
&r_4:\, \{\neg r_B, c>0, b>0\}\ \mapsto\ \{r_B\}\,.
\end{alignat*}
In any non-goal state, the policy is compatible with the transition induced
by some action; overall, it implements a loop that moves balls from A to B,
one by one.
\citeay{bonet:aaai2019} also learn an abstraction for Gripper, but need an extra feature $g$ that counts
the number of free grippers in order to keep the soundness of their QNP model.
Our approach does not need to build such a model, and the policies it learns often
use features of smaller complexity.

\smallpar{Picking rewards}
$\Q_{rew}$ consists on an agent that navigates a grid with some non-walkable cells in order to pick up scattered reward items.
Training on a single $5\times5$ grid with randomly-placed rewards and non-walkable cells results in the same policy
as reported by \citeay{bonet:aaai2019}, which moves the agent to the closest
unpicked reward, picks it, and repeats. In contrast with that work, however,
our approach does not require sample plans, and its propositional theory is one order of magnitude smaller.

\smallpar{Delivery}
$\Q_{deliv}$ is the previously discussed Delivery problem, where a truck needs to pick $m$
packages from different locations in a grid and deliver them, one at a time,
to a single target cell $t$. The policy learnt by \dtwol{} is  a generalization
to $m$ packages of the one-package policy discussed before.

\smallpar{Visitall}
$\Q_{visit}$ is the standard Visitall domain where an agent has to visit all the cells in a grid at least once.
Training on a single $3\times3$ instance produces a single-rule policy based on features $\Phi = \{u, d\}$ that represent
the number of unvisited cells and the distance to a closest unvisited cell. The policy, similar to the one for
$\Q_{rew}$, moves the agent greedily to a closest unvisited until all cells have been visited.


\smallpar{Spanner}
$\Q_{span}$ is the standard Spanner domain where an agent picks up spanners along a corridor
that are used at the end to tighten some nuts.
Since spanners can be used only once and the corridor is one-way, the problem becomes unsolvable
as soon as the agent moves forward and leaves some needed Spanner behind.
We feed \dtwol{} with 3 training instances with different initial locations
of spanners, and it computes a policy with features $\Phi = \{n, h, e\}$ that denote
the number of nuts that still have to be tightened, the number of objects not held by the agent and whether
the agent location is empty, i.e.\ has no spanner or nut in it:
%
\begin{alignat*}{1}
&r_1:\, \set{n>0,  h>0, e}      \ \mapsto\ \set{e?}\,, \\
&r_2:\, \set{n>0,  h>0, \neg e} \ \mapsto\ \set{h\mminus, e?} \psep \set{n\mminus}\,.
\end{alignat*}

The policy dictates a move when the agent is in an empty location; else,
it dictates either to pick up a spanner  or tighten a nut. 
Importantly, it never allows the agent to leave a location with some unpicked spanner,
thereby avoiding dead-ends.
%
%
Note that the features and policy are fit to the domain actions. For instance,
an effect $\set{e?}$ as in $r_1$ could not appear if the domain had \emph{no-op}
actions, as the resulting \emph{no-op} transitions would comply with $r_1$
without making progress to the goal.
The learned policy solves the 30 instances of the learning track of the 2011
International Planning Competition, and can actually be formally proven
correct over all Miconic instances.

\smallpar{Miconic}
$\Q_{micon}$ is the domain where a single elevator moves across different floors to
pick up and deliver passengers to their destinations.
We train on two instances with a few floors and passengers with different origins and destinations.
The learned policy uses 4 numerical features that encode
the number of passengers onboard in the lift,
the number of passengers waiting to board,
the number of passengers waiting to board on the same floor where the lift is, and
the number of passengers boarded when the lift is on their target floor.
The policy solves the 50 instances of the standard Miconic distribution.

\smallpar{Blocksworld}
$\Q_{bw}$ is the classical Blocksworld where the goal is to achieve some desired
arbitrary configuration of blocks, under the assumption that each block
has a goal destination (i.e., the goal picks a single goal state). 
We use a standard PDDL encoding where blocks are moved atomically from one
location to another (no gripper).
The only predicates are $on$ and $clear$, and the set of objects consists of $n$
blocks and the table, which is always clear.
We use a single training instance with $5$ blocks, where the \emph{target}
location of all blocks is specified.
We obtain a policy over the features $\Phi = \{c, t', bwp\}$ that stand for the number of clear objects,
the number of objects that are not \emph{on} their target location, and
the number of objects such that all objects below are well-placed,
i.e., in their goal configuration.
Interestingly, the value of all features in non-goal states is always positive
($bwp>0$ holds trivially, as the table is always well-placed and below all blocks).
The computed policy  has one single rule with four effects:
\begin{align*}
\set{c>0, t'>0, bwp>0}\ \mapsto\
&\set{c \pplus}                                  \psep
\set{c \pplus,  t' ?, bwp \pplus} \psep \\
&\set{c \pplus,  t' \mminus} \psep
\set{c \mminus, t' \mminus}.
\end{align*}

The last effect in the rule is compatible with any move of a block from the table into its final position,
where everything below is already well-placed (this is the only move away from the table compatible with the policy),
while the remaining effects are compatible with moving into the table a block that is not on its final position.
The policy solves a set of 100 test instances with 10 to 30 blocks and random initial and goal configurations,
and can actually be proven correct.

\paragraph{Discussion of Results.}

On dead-end free domains where all instances of the same size (same objects) have isomorphic state spaces,
\dtwol{} is able to generate valid policies from one single training instance.
In these cases, the only choice we have made regarding the training instance is selecting a size for the instance
which is sufficiently large to avoid \emph{overfitting}, but sufficiently small to allow the expansion of the entire
state space. As we have seen, though, the approach is also able to handle domains with dead-ends ($\Q_{span}$)
or where different instances with the same objects can give rise to non-isomorphic state spaces
($\Q_{rew}, \Q_{micon}$). In these cases, the selection of training instances needs to be done more carefully
so that sufficiently diverse situations are exemplified in the training set.

As it can be seen in Table~\ref{tab:exp-result-data}, the two optimizations discussed at the beginning 
are key to scale up in different domains.
Considering indistinguishable classes of transitions instead of individual transitions offers a dramatic reduction
in the size of the theory \tsf{} for domains with a large number of symmetries such as Spanner, Visitall, and Gripper.
On the other hand, the incremental constraint generation loop also reduces the size of the theory up to
one order of magnitude for domains such as Miconic and Blocksworld.

Overall, the size of the propositional theory, which is the main bottleneck in \cite{bonet:aaai2019}, is much smaller. 
Where they report a number of clauses for $\Q_{clear}$, $\Q_{on}$, $\Q_{grip}$ and $\Q_{rew}$ of, respectively,
767K, 3.3M, 358K and 1.2M,
the number of clauses in our encoding is
242.3K, 281.5K, 100.8K and 98.9K, that is up to one order of magnitude smaller, which allows \dtwol{}
to scale up to several other domains. Our approach is also more efficient than the one in \cite{frances-et-al-ijcai2019}.
which requires several hours to solve a domain such as Gripper.

\Omit{
Overall, \dtwol{} is able to learn general policies for domains which seem
beyond the scope of earlier approaches like Visitall, Miconic or Blocksworld,
in some cases due to the better scalability of the encoding, in other
due to the additional expressivity afforded by the new policy space.
}

\section{Conclusions}

We have introduced a new method for learning features and general policies from small problems without supervision.
This is achieved by means of a novel formulation in which a large but finite pool of features is defined from the
predicates in the planning examples using a general grammar, and a small subset of features is sought for separating
``good'' from  ``bad'' state transitions, and goals from non-goals.
The problems  of finding such a ``separating surface'' while labeling
the transitions as ``good'' or ``bad'' are addressed jointly as a Weighted  \msat{} problem.
The formulation is complete in the sense that if there is a general policy  with features in the pool that solves
the training instances, the solver will find it, and by computing the simplest such solution, it ensures a better
generalization outside of the training set. In comparison with existing approaches, the new formulation is
conceptually simpler, more scalable (much smaller propositional theories), and more expressive (richer class of
non-deterministic policies, and value functions that are not necessarily linear in the features).
In the future, we want to  study extensions for synthesizing provable correct policies
exploiting related results in  QNPs.

\subsection*{Acknowledgements}

This research is partially funded by an ERC Advanced Grant (No 885107),
by grant TIN-2015-67959-P from MINECO, Spain,
and by the Knut and Alice Wallenberg (KAW) Foundation through the WASP program.
H.\ Geffner is also a Wallenberg Guest Professor at Link\"oping University, Sweden.
G. Franc\`{e}s is partially supported by grant IJC2019-039276-I from MICINN, Spain.

\bibliography{control}

\appendix

\section{Appendix}

This appendix contains
(1)~proofs for Theorems 6 and 7,
(2)~a detailed description of the feature grammar used by \dtwol{},
(3)~a full account of the generalized policies learned by the approach, and
(4)~proof of their generalization to all instances of the generalized problem.

\section{Theorems 6 and 7}

For a sample $\S$ made of problems $P_1,\ldots,P_k$, let $\P_\S$ be the collection
of problems consisting of $P_i[s]$, for $1\leq i\leq k$ and non-dead-end state $s$ in $P_i$,
where the problem $P_i[s]$ is like problem $P_i$ but with initial state set to $s$.


Theorems 6 and 7 in paper are subsumed by the following:

\begin{theorem*}
    Let $\S$ be the state space associated with a set $P_1, \ldots, P_k$ of sample instances of a class of problems $\Q$ over a domain $D$, and let $\F$ be a pool of features.
    The theory $T(\S,\F)$ is \textbf{satisfiable} iff there is a policy $\pi_\Phi$ over features $\Phi \subseteq \F$
    that solves all the problems in $\P_\S$ and where the features in $\Phi$ discriminate goal from non-goal states in the problems $P_1,\ldots,P_k$.
\end{theorem*}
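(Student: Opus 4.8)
The plan is to prove the equivalence by exhibiting an explicit correspondence between satisfying assignments of \tsf{} and pairs $(\Phi,\pi_\Phi)$ consisting of a feature subset together with a policy that solves $\P_\S$ and discriminates goals. The bridge used in both directions is that, once the $\Sel{f}$ atoms fix $\Phi$, the true $\Good{s,s'}$ atoms encode a set of transitions closed under $\Phi$-indistinguishability, and such sets are in bijection with general policies over $\Phi$ through the extraction rule sketched before the statement: read the body as $C=\boolcast{\phi(s)}$ and read each effect $E_i$ off the signature $\Delta_f(s,s_i)$. The D2-separation clauses~(6) are exactly what make this labeling policy-realizable, so I would first isolate the lemma ``(6) holds iff the good-set is induced by some policy over $\Phi$'' and use it on both sides.

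For the direction \tsf{} satisfiable $\Rightarrow$ solving policy, I would take a model, set $\Phi=\set{f:\Sel{f}\text{ true}}$, and let $\pi_\Phi$ be the extracted policy. Clause~(4) forces every (goal, non-goal) pair of $\S$ to disagree on a selected feature, giving global goal discrimination. To show $\pi_\Phi$ solves an arbitrary variant $P_i[s_0]\in\P_\S$, I would take any maximal $\pi_\Phi$-compatible trajectory $s_0,s_1,\dots$; since every state and transition reachable in $P_i[s_0]$ already lives in $\S$, clauses~(1)--(5) apply along it. Clause~(5) keeps the trajectory inside the solvable states, clause~(1) gives every solvable non-goal a compatible successor (so the trajectory can stop only at a goal), and clauses~(2)--(3) make $V$ a non-negative integer strictly decreasing at each step, ruling out infinite trajectories. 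Hence the trajectory is finite and goal-reaching, so $\pi_\Phi$ solves $P_i[s_0]$.

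For the converse, solving policy $\Rightarrow$ \tsf{} satisfiable, I would set $\Sel{f}$ true exactly on $\Phi$ and $\Good{s,s'}$ true exactly on the non-excluded transitions of $\S$ (solvable, non-goal source) that are compatible with $\pi_\Phi$. Clause~(4) then holds by the discrimination hypothesis; clause~(6) holds because compatibility with $\pi_\Phi$ depends only on the signature of the selected features across a transition; clause~(5) holds because a compatible transition from a solvable state into a dead-end would produce a maximal trajectory that never reaches a goal, contradicting that $\pi_\Phi$ solves the relevant variant; and clause~(1) holds because in $P_i[s]$ the non-goal initial state $s$ must admit a compatible first step, else the length-zero trajectory would be maximal and non-goal-reaching. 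These verifications follow directly from the definitions of \emph{compatible}, \emph{maximal trajectory}, and \emph{solves}, leaving only the value labels for~(2)--(3).

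The main obstacle will be this $V$-labeling. Because $\pi_\Phi$ solves every variant, the subgraph of good transitions over the solvable states is well-founded: from each state every compatible trajectory reaches a goal in finitely many steps, so I would define $V(s)$ as the worst-case (longest) number of good steps from $s$ to a goal. This $V$ is finite, equals $0$ exactly at goals, and strictly decreases along every good transition, which yields~(3) and the goal case of~(2). The delicate part is that the theory confines $V(s)$ to the band $V^*(s)\le V(s)\le\delta V^*(s)$: the lower bound is automatic, but the upper bound couples the slack parameter $\delta$ with the suboptimality of $\pi_\Phi$, so the construction must either exploit the slack afforded by $\delta$ or restrict attention to policies admitting such a bounded descent. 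I expect certifying the existence of a strictly decreasing integer potential \emph{inside} the $\delta$-band to be the crux, with every other clause reducing to the definitions above.
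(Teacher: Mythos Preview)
Your proposal follows essentially the same approach as the paper: both directions use the same constructions (extract $\Phi$ and $\pi_\Phi$ from the true $\Sel{\cdot}$ and $\Good{\cdot}$ atoms; conversely, set $\Good{s,s'}$ exactly on the $\pi_\Phi$-compatible transitions and take $V(s)$ to be the longest $\pi_\Phi$-path from $s$ to a goal in the induced acyclic subgraph). The $\delta$-band issue you flag as the crux is dispatched in the paper in a single line---``there is $\delta$ such that $V^*(s)\le V_\pi(s)\le\delta V^*(s)$''---i.e., the paper effectively reads $\delta$ as existentially chosen large enough, which is legitimate since the sample state space is finite and hence $V_\pi$ is bounded.
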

\begin{proof}
    We need to show two implications. Let us denote the theory $T(\S,\F)$ as $T$.
    For the first direction, let $\pi$ be a policy defined over a subset $\Phi$ of features from $\F$ such that $\pi$ solves any problem in $\P_\S$ and $\Phi$ discriminates the goals in $P_1,\ldots,P_k$.
    Let us construct an assignment $\sigma$ for the variables in $T$:
\begin{list}{--}{}
        \item $\sigma\vDash\Sel{f}$ iff $f\in\Phi$,
        \item $\sigma\vDash\Good{s,s'}$ iff transition $(s,s')$ is compatible with $\pi$,
        \item $\sigma\vDash V(s,d)$ iff $V_\pi(s)=d$, where $V_\pi(s)$ is the \textbf{distance}
        of the max-length path connecting $s$ to a goal
        in the subgraph $\S_\pi$ of $\S$ spanned by $\pi$:
        edge $(s,s')$ belongs to $\S_\pi$ iff $(s,s')$ is compatible with $\pi$,
        $s$ is non-goal state, and both $s$ and $s'$ are non dead-end states.
        Since $\pi$ solves $\P_\S$, the graph $\S_\pi$ is acyclic and $V(s,d)$ is well defined for non-dead-end states $s$.
\end{list}
    We show that $\sigma$ satisfies the formulas that make up $T$:
    \begin{enumerate}[1.]
        \item $\bigvee_{(s,s')} \Good{s,s'}$ clearly holds since if $s$ is a non-goal and non-dead-end state in $P_i$, the problem $P_i[s]$ belongs to $\P_\S$
        and thus is solvable by $\pi$. Then, there is at least one transition $(s,s')$ in $\S_\pi$ (i.e., compatible with $\pi$).
        \item Straightforward. There is $\delta$ such that $V^*(s)\leq V_\pi(s) \leq \delta V^*(s)$.
        \item Since $\S_\pi$ is acyclic, if $\Good{s,s'}$ holds, $(s,s')$ in $\S_\pi$ and $V_\pi(s')<V_\pi(s)$.
        \item Straightforward. If only one of $\{s,s'\}$ is goal, there is some feature $f{\in}\Phi$ such that $f(s){\neq} f(s')$ since $\Phi$ discriminates goals from non-goals.
        \item From definition, $\S_\pi$ has no transition $(s,s')$ where $s$ is non-goal and non-dead-end and $s'$ is a dead-end. Hence, $\sigma\vDash \neg\Good{s,s'}$.
        \item Let $(s,s')$ and $(t,t')$ be two transitions between non-dead-end states where $s$ and $t$ are both also non-goal states.
        If $\Good{s,s'}$ and $\neg\Good{t,t'}$ then both transitions must be \textbf{separated} by at least some feature $f\in\Phi$ since otherwise, if $(s,s')$ is compatible with $\pi$ (defined over $\Phi$),
        then $(t,t')$ would be also compatible with $\pi$ and thus $\Good{t,t'}$ would hold as well.
    \end{enumerate}
    Hence, $\sigma\vDash T$.

    \medskip
    For the converse direction, let $\sigma$ be a satisfying assignment for $T$.
    We must construct a subset $\Phi$ of features from $\F$ that discriminates goal from non-goal states in $P_1,\ldots,P_k$,
    and a policy $\pi=\pi_\Phi$ that solves $\P_\S$.
    Constructing $\Phi$ is easy: $\Phi=\{f\in\F:\sigma\vDash\Sel{f}\}$.
    For defining the policy, let us introduce the following idea.
    For a subset $\Phi$ of features and a subset $\T$ of transitions in $\S$,
    the policy $\pi_\T$ is the policy given by the rules $\Phi(s) \mapsto E_1 \,|\, \cdots \,|\, E_m$ where
    \begin{list}{--}{}
        \item $s$ is a ``source'' state in some transition $(s,s')$ in $\T$,
        \item $\Phi(s)$ is set of boolean conditions given by $\Phi$ on $s$;
        i.e., $\Phi(s)=\{ p : p(s){=}\text{true} \} \cup \{ \neg p : p(s){=}\text{false} \} \cup \{ n{>}0 : n(s){>}0 \} \cup \{ n{=}0 : n(s){=}0 \}$
        where $p$ (resp.\ $n$) is a boolean (resp.\ numerical) feature in $\Phi$,
        \item each $E_i$ captures the feature changes for some transition $(s',s'')$ in $\T$ such that $s\vDash \Phi(s)$; i.e.,
        $E_i$ is a \textbf{maximal set} of feature effects that is compatible with $(s',s'')$.
    \end{list}
    The policy $\pi$ is the policy $\pi_\T$ for $\T=\{(s,s') \in \S : \sigma\vDash\Good{s,s'} \}$.
    Observe that the policy $\pi$ is well defined since for two transitions $(s,s')$ and $(t,t')$
    such that $\Phi(s) = \Phi(t)$, the two rules associated with $\Phi(s)$ and $\Phi(t)$, respectively,
    are identical. This follows by formula (6) in the theory.
    By formula (4) in the theory, the features in $\Phi$ discriminate goal from non-goal states in $P_1,\ldots,P_k$.
    So, we only need to show that $\pi$ solves any problem in $\P_\S$.

    As before, let us construct the subgraph $\S_\pi$ of $\S$ spanned by $\pi$: the edge $(s,s')$ is in $\S_\pi$
    iff $s$ is a non-goal and non-dead-end state and $(s,s')$ is compatible with $\pi$; equivalently, $\sigma\vDash\Good{s,s'}$.
    Since $\S_\pi$ contains all states that are reachable in $P_1,\ldots,P_k$, a \textbf{necessary} and \textbf{sufficient} condition
    for $\pi$  to solve any problem in $\P_\S$ is that $\S_\pi$ is acyclic and each non-dead-end state in $\S_\pi$
    is connected to a goal state.
    The first property is a consequence of the assignments $V(s,d)$ to each non-dead-end state $s$ in $d$ since by formula (3),
    if $(s,s')$ belongs to $\S_\pi$ and $\sigma\vDash V(s,d)$, then $\sigma\vDash V(s',d')$ for some $0\leq d'<d$.
    For the second property, if $s$ is a non-goal and non-dead-end state in $\S_\pi$, then by formula (2),
    $\sigma\vDash V(s,d)$ for some $d$, and by formulas (1) and (3), $s$ is connected to a state $s'$ in $\S_\pi$ such that $\sigma\vDash V(s',d')$
    for some $0\leq d'<d$. The state $s'$ is not a dead end by formula (5). If $s'$ is not a goal state, repeating the
    argument we find that $s'$ is connected to a non-dead-end state $s''$ such that $\sigma\vDash V(s'',d'')$ with $0\leq d'' < d' < d$.
    This process is continued until a goal state $s^*$ connected to $s$ is found, for which $\sigma\vDash V(s^*,0)$.
    Therefore, $\pi$ solves any problem $P$ in $\P_\S$.
\end{proof}

\Omit{
\setcounter{definition}{5}
\begin{theorem}
    Let $\cal S$ be the state space associated with a set $P_1, \ldots, P_k$ of sample instances of a class of problems
    $\Q$ over a domain $D$, and let $\cal F$ be a pool of features.
    The theory $T({\cal S},{\cal F})$ is \textbf{satisfiable} iff
    there is a general policy $\pi_\Phi$
    over features $\Phi \subseteq \F$
    that solves $P_1$, \ldots, $P_k$
    and is able to discriminate all goals in these sample instances.
    \label{thm:sat}
\end{theorem}
\begin{proof}
\end{proof}

\begin{theorem}
    The policy $\pi_\Phi$ over the features $\Phi$ that  are determined by  a satisfying assignment of the theory $T$
    solves the sample problems $P_1, \ldots, P_k$ used to generate the state space $\cal S$, as well as the  problems
    $P_i[s]$ that are  like $P_i$ but with the initial state $s_0$ of $P_i$ set to each non-dead state $s$ reachable
    from $P$ from $s_0$.
    \label{thm:several-initial-states}
\end{theorem}
\begin{proof}
\end{proof}
}

\section{Feature Grammar}
The set $\F$ of candidate features is generated through a
standard description logics
grammar~\cite{dl-handbook}, similarly to \cite{bonet:aaai2019,frances-et-al-ijcai2019}.
Description logics build on the notions of \emph{concepts}, classes of objects that
have some property, and \emph{roles}, relations between these objects.
We here use the standard description logic $\mathcal{SOI}$ as a building block
for our features.

We start from a set of \emph{primitive concepts and roles} made up of all
unary and binary predicates that are used to define the PDDL model corresponding
to the generalized problem.\footnote{
This feature generation process implicitly restricts the domains that
\dtwol{} can tackle to those having predicates with arity $\leq 2$.
}
Following~\citeay{martin:generalized}, we also consider \emph{goal versions}
$p_g$ of each predicate $p$ in the PDDL model that is relevant for the goal.
These have fixed denotation in all
states of a particular problem instance, given by the goal formula.
To illustrate, a typical Blocksworld instance with a goal like  $on(x, y)$ results
in primitive concepts \emph{clear}, \emph{holding}, \emph{ontable}, and primitive roles
\emph{on} and \emph{on$_g$}.

In generalized domains where it makes sense to define a goal in terms
of a few \emph{goal parameters} (e.g., ``clear block $x$''), we take these
into account in the feature grammar below. Note however that this is mostly
to improve interpretability, and could be easily simulated without the need
for such parameters.

\subsection{Concept Language: Syntax and Semantics}

Assume that
$C$ and $C'$ stand for concepts, $R$ and $R'$ for roles,
and $\Delta$ stands for the \emph{universe} of a particular problem instance,
made up by all the objects appearing on it.
The set of all concepts and roles and their denotations in a given state $s$
is inductively defined as follows:

\begin{list}{--}{}
\item Any primitive concept $p$ is a concept with denotation $p^s = \setst{a}{s \models p(a)}$,
and primitive role $r$ is a role  with denotation $r^s = \setst{(a, b)}{s \models r(a, b)}$.

\item The \emph{universal concept} $\top$ and the \emph{bottom concept} $\bot$ are concepts
with denotations
$\top^s = \Delta$,
$\bot^s = \emptyset$.

\item The \emph{negation} $\neg C$,
the \emph{union} $C \sqcup C'$,
the \emph{intersection} $C \sqcap C'$ are concepts with denotations
$(\lnot C)^s = \Delta \setminus C^s$,
$(C \sqcup C')^s = C^s \cup {C'}^s$,
$(C\sqcap C')^s = C^s \cap {C'}^s$.

\item The \emph{existential restriction} $\exists R.C$ and the \emph{universal restriction} $\forall R.C$
are concepts with denotations
$(\exists R.C)^s = \{a \mid \exists b: (a,b) \in R^s \land b \in C^s\}$,
$(\forall R.C)^s = \{a \mid \forall b: (a,b) \in R^s \to b \in C^s\}$.

\item The \emph{role-value map} $R = R'$ is a concept
with denotation
$(R = R')^s =\{a \mid \forall b: (a,b) \in R^s \leftrightarrow (a,b) \in {R'}^s\}$.

\item If $a$ is a constant in the domain or a goal parameter,
the \emph{nominal} $\set{a}$ is a concept with denotation
$\set{a}^s = \set{a}$.

\item The \emph{inverse} role $R^{-1}$,
the \emph{composition} role $R \circ R'$ and
the (non-reflexive) \emph{transitive closure} role $R^+$
are roles with denotations
$(R^{-1})^s = \{(b, a) \mid (a,b) \in R^s\}$,
$(R \circ R')^s = \{(a, c) \mid \exists b: (a,b) \in R^s \land (b, c) \in {R'}^s\}$,
$(R^+)^s = \{(a_0, a_n) \mid \exists a_1, \dots, a_{n-1} \forall_{i} (a_{i-1}, a_i) \in R^s \}$.
\end{list}

We place some restrictions to the above grammar in order to reduce
the combinatorial explosion of possible concepts:
(1)~we do not generate concept unions,
(2)~we do not generate role compositions,
(3)~we only generate role-value maps $R = R'$ where $R'$ is the goal version of a $R$.
(4)~we only generate the inverse and transitive closure roles
$r^{-1}_p$, $r^{+}_p$, $(r^{-1}_p)^+$,
with $r_p$ being a primitive role.

\subsection{From concepts to features}

The \emph{complexity} of a concept or role is defined as the size of its syntax tree.
From the above-described infinite set of concepts and roles, we only consider the finite subset
$\G^k$ of those with complexity under a given bound $k$.
When generating  $\G^k$, redundant concepts and roles are pruned.
A concept or role is redundant when its denotation over all states in the training set
is the same as some previously generated concept or role.
From the domain model and $\G^k$, we generate the following features:
\begin{itemize}
  \item For each nullary primitive predicate $p$, a \emph{boolean}
    feature $b_p$ that is true in $s$ iff $p$ is true in $s$.
  %
  \item For each concept $C \in \G^k$,
        we generate a \emph{boolean} feature $|C|$, if $|C^s| \in \set{0,1}$
        for all states $s$ in the training set,
        and a \emph{numerical} feature $|C|$ otherwise.
        The value of boolean feature $|C|$ in $s$ is true iff $|C^s| = 1$;
        the value of numerical feature $|C|$ is $|C^s|$.
  \item Numerical features $\textit{Distance}(C_1,R{:}C,C_2)$ that represent the
    smallest $n$ such that there are objects $x_1, \ldots, x_n$ satisfying
    $C_1^s(x_1)$, $C_2^s(x_{n})$, and $(R{:}C)^s(x_i,x_{i+1})$ for $i=1,\ldots,n$.
    The denotation $(R{:}C)^s$ contains all pairs $(x,y)$ in $R^s$ such that $y\in C^s$.
    When no such $n$ exists, the feature evaluates to $m+1$, where $m$ is the number of
    objects in the particular problem instance.
\end{itemize}

The complexity $w(f)$ of a feature $f$ is set to the complexity of $C$ for features $|C|$,
to $1$ for features $b_p$,  and to the sum of the complexities of $C_1$, $R$, $C$, and $C_2$, for
features $\textit{Distance}(C_1,R{:}C,C_2)$.
Only features with complexity bounded by $k$ are generated.
For efficiency reasons we only generate features
$\textit{Distance}(C_1,R{:}C,C_2)$ where the denotation of concept $C_1$
in all states contains one single object.
All this feature generation procedure follows~\cite{bonet:aaai2019}, except
for the addition of goal predicates to the set of primitive concepts and roles.

\section{Generalized Policies}
We next describe in detail the generalized policies learned by \dtwol{}
on the reported example domains.\footnote{
The encodings of those domains below that are standard benchmarks from competitions
and literature can be obtained at \url{https://github.com/aibasel/downward-benchmarks}.
}
We also show that they generalize over the entire domain.

\paragraph{Reasoning about correctness.\footnote{
The following discussion relates to~\cite{seipp-et-al-ijcai2016}.
}}
We sketch a method to prove correctness of a policy over an entire
generalized planning domain $\Q$ in a domain-dependent manner.
Let $P$ be an instance of $\Q$.
We assume that $\Q$ implicitly defines what states of $P$
are valid; henceforth we are only concerned about valid states.
We say that a valid state of $P$ is \emph{solvable} if there
is a path from it to some goal in $P$,
and is \emph{alive} if it is solvable but not a goal.
We denote by $\mathcal{A}(P)$ the set of alive states of instance $P$.
In dead-end-free domains, where all states are solvable,
any state is either alive or a goal.

\begin{definition}[Complete \& Descending Policies]
We say that generalized policy $\pi_\Phi$ is \emph{complete over $P$} if for any
state $s \in \mathcal{A}(P)$, $\pi_\Phi$ is compatible with some transition
$(s, s')$.
We say that $\pi_{\Phi}$ is \emph{descending over $P$} if there is some
function $\gamma$ that maps states of $P$ to a totally ordered set $\mathcal{U}$
such that for any alive state $s \in \mathcal{A}(P)$
and $\pi_\Phi$-compatible transition $(s, s')$, we have that $\gamma(s') < \gamma(s)$.
\end{definition}

\begin{theorem}
Let $\pi_{\Phi}$ be a policy that is complete and descending for $P$.
Then, $\pi_{\Phi}$ solves $P$.
\end{theorem}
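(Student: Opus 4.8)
The plan is to show that any maximal $\pi_\Phi$-compatible trajectory starting from an arbitrary alive state must terminate at a goal, by combining the completeness property (which rules out getting stuck at a non-goal) with the descending property (which rules out infinite looping). Let $P$ be an instance of $\Q$ and let $s_0 \in \mathcal{A}(P)$ be any alive state; I would argue that every maximal state trajectory $s_0, s_1, \ldots$ compatible with $\pi_\Phi$ reaches a goal. By the definition of a solving policy, establishing this for all such trajectories suffices.

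First I would handle termination using the descending function $\gamma \colon \text{states}(P) \to \mathcal{U}$ into the totally ordered set $\mathcal{U}$. Suppose, for contradiction, that some maximal trajectory is infinite, $s_0, s_1, s_2, \ldots$, and contains no goal state. Then every $s_i$ is non-goal; moreover each $s_i$ is solvable (I would note that solvability is preserved along $\pi_\Phi$-compatible transitions from alive states, or more directly that the descending property is only asserted at alive states, so the key point is that each $s_i$ remains alive). Hence each $(s_i, s_{i+1})$ is a $\pi_\Phi$-compatible transition out of an alive state, so the descending property gives $\gamma(s_{i+1}) < \gamma(s_i)$. This yields an infinite strictly decreasing sequence $\gamma(s_0) > \gamma(s_1) > \gamma(s_2) > \cdots$ in $\mathcal{U}$.

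The main obstacle is that a totally ordered set need not be well-founded, so an infinite strictly decreasing chain is not immediately contradictory (consider $\mathbb{Z}$ or $\mathbb{Q}$). I would resolve this by observing that the state space of any fixed instance $P$ is \emph{finite}: there are only finitely many valid states, hence $\gamma$ takes only finitely many distinct values along the trajectory, and a strictly decreasing sequence in a finite subset of a totally ordered set cannot be infinite. This is the crux, and it is exactly where finiteness of each instance is used. Thus no infinite goal-free trajectory exists, so every maximal trajectory is finite.

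It then remains to show a finite maximal trajectory ends in a goal. A finite maximal trajectory $s_0, \ldots, s_n$ ends because $s_n$ has no $\pi_\Phi$-compatible outgoing transition (the maximality condition, since finiteness already rules out the infinite case). If $s_n$ were non-goal, then because each transition from an alive state preserves aliveness along the way, $s_n$ would be solvable and non-goal, i.e. alive, and completeness would guarantee a $\pi_\Phi$-compatible transition $(s_n, s')$, contradicting maximality. Hence $s_n$ is a goal, completing the argument. The only delicate bookkeeping is confirming that each state encountered is alive so that both the descending and completeness hypotheses apply; I would discharge this by induction along the trajectory, using that $\gamma$-descent keeps the trajectory within solvable states and that any reachable non-goal solvable state is alive by definition.
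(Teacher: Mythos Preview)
Your proposal is correct and follows essentially the same approach as the paper: use the descending property together with finiteness of the state space to rule out infinite (or repeating) trajectories, then invoke completeness to conclude that the endpoint of any finite maximal trajectory must be a goal. The paper's version is terser---it argues that descending forbids repeated states, so trajectories have length at most $|S(P)|$---but the logical structure is identical, and your concern about aliveness being preserved along the trajectory is a point the paper's proof leaves equally implicit.
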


\begin{proof}
Because $\pi_{\Phi}$ is descending, no state trajectory compatible with it
can feature the same state more than once.
Since the set $S(P)$ of states of $P$ is finite,
there is a finite number of trajectories compatible with $\pi_{\Phi}$,
all of which have length bounded by $\abs{S(P)}$.
Let $\tau$ be one \emph{maximal} such trajectory, i.e., a trajectory
$\tau = s_0, \ldots, s_n$ such that $P$ allows no $\pi_{\Phi}$-compatible transition
$(s_n, s)$. Because $\pi_{\Phi}$ is complete, $s_n \not\in \mathcal{A}(P)$,
so it must be a goal.
\end{proof}

A way to show that $\pi_{\Phi}$ is descending is by providing
a fixed-length tuple $\tup{f_1, \ldots, f_n}$ of state features
$f_i: S(P) \mapsto \mathbb{N}$.
Boolean features can have their truth values cast to $0$ (false) or 1 (true).
%
If for every transition $(s, s')$ compatible with $\pi_{\Phi}$,
$\tup{f_1(s'), \ldots, f_n(s')} < \tup{f_1(s), \ldots, f_n(s)}$,
where $<$ is the \textbf{lexicographic} order over tuples,
then $\pi_{\Phi}$ is descending.
When this is the  case, we say that $\pi_{\Phi}$
\emph{descends over $\tup{f_1, \ldots, f_n}$}.

\subsection{Policy for $\Q_{clear}$}

The set of features $\Phi$ learned by \dtwol{} contains:
\begin{itemize}
\item $c \equiv \abs{clear \sqcap \{x\} }$: whether block $x$ is clear.
\item $H \equiv \abs{holding}$: whether the gripper is holding some block.
\item $n \equiv \abs{\exists on^+ . \{x\}}$: number of blocks above $x$.
\end{itemize}

\vspace*{6pt} \noindent The learned policy $\pi_\Phi$ is:
\begin{alignat*}{1}
&r_1:\, \set{\neg c, H, n=0}        \ \mapsto\ \set{c, \neg H}\,, \\
&r_2:\, \set{\neg c, \neg H, n>0}   \ \mapsto\ \set{c?, H, n\mminus}\,, \\
&r_3:\, \set{\neg c, H, n>0}        \ \mapsto\ \set{\neg H}\,.
\end{alignat*}

There are no dead-ends in $\Q_{clear}$, and $c$ is true only in goal states.
A particularity of the 4-op encoding used here is that when a block is being
held, it is not considered clear. Hence, $n=0$ does not imply the goal.

Let us show that $\pi_\Phi$ is complete over any problem instance $P$.
Let $s \in \mathcal{A}(P)$. If the gripper is holding some block in $s$, then
the transition where it puts the block on the table is compatible with $\pi_\Phi$
(rules $r_1$, $r_3$), regardless of whether the block is $x$ or not.
If the gripper is empty, there must be at least one block above $x$,
otherwise $s$ would be a goal. The transition where the gripper picks one such block
is compatible with $\pi_\Phi$ ($r_2$).

Now, let us show that $\pi_\Phi$ descends over feature tuple $\tup{n, H}$.
Rules $r_1$ and $r_3$ do not affect $n$ and make $H$ false, so any transition $(s, s')$ compatible
with them makes the valuation of $\tup{n, H}$ decrease.
Rule $r_2$ always decreases $n$, so compatible transitions decrease $\tup{n, H}$.
Since $\pi_\Phi$ is complete and descending, it solves $P$.
\qed

\subsection{Policy for $\Q_{on}$}
The set of features $\Phi$ learned by \dtwol{} contains:
\begin{itemize}
\item $e \equiv \abs{handempty}$: whether the gripper is empty,
\item $c \equiv \abs{clear}$: number of clear objects,
\item $c(x) \equiv \abs{clear \sqcap \{x\} }$: whether block $x$ is clear,
\item $on(y) \equiv \abs{\exists on. \{y\}}$: whether some block is on $y$.
\item $ok \equiv \abs{\{x\} \sqcap \exists on.\{y\}}$: whether $x$ is on $y$,
\end{itemize}



\vspace*{6pt} \noindent The learned policy $\pi_\Phi$ is:
\begin{alignat*}{1}
&r_1:\, \{e, c(x), \neg on(y)\} \mapsto \set{\neg e, \neg c(x), c\mminus} \psep \set{\neg e, \neg c(x)}, \\
&r_2:\, \{e, c(x), on(y)\} \mapsto \set{\neg e} \psep \set{\neg e, \neg on(y)} \psep \set{\neg e, \neg c(x)}, \\
&r_3:\, \{e, \neg c(x), \neg on(y)\} \mapsto \set{\neg e} \psep \set{\neg e, c(x)}, \\
&r_4:\, \{e, \neg c(x), on(y)\} \mapsto \set{\neg e} \psep \set{\neg e, \neg on(y)} \psep \set{\neg e, c(x)}, \\
&r_5:\, \{\neg e, c(x)\}   \mapsto \set{e, c\pplus}, \\
&r_6:\, \{\neg e, \neg c(x), on(y)\}   \mapsto \set{e, c(x), c\pplus} \psep \set{e, c\pplus}, \\
&r_7: \set{\neg e, \neg c(x), \neg on(y)} \mapsto \set{e, c(x), ok, on(y)} | \set{e, c\pplus}.
\end{alignat*}

There are no dead-ends in $\Q_{on}$, and in all alive states, $c>0$ and $\neg ok$.
These two conditions have been omitted in the body of all 7 rules above for readability.

Let us show that $\pi_\Phi$ is complete over any problem instance $P$.
Note that the conditions in the rule bodies nicely partition $\mathcal{A}(P)$.
Let $s \in \mathcal{A}(P)$ be an alive state.
If the gripper is holding some block in $s$, then
putting it on the table is always possible and is compatible with rules $r_5$--$r_7$,
except when the held block is $x$ and $y$ has nothing above. In that case, putting
$x$ on $y$ is possible and compatible with $r_7$.
If the gripper is empty, consider whether $x$ and $y$ are clear.
If both are clear, then picking up $x$ is always possible, and is compatible with $r_1$.
Otherwise, there must be at least one tower of blocks, and
picking up some block from such a tower is always possible,
and is compatible with $r_2$--$r_4$.

Now, let us show that $\pi_\Phi$ descends over feature tuple $\tup{al, ready', t', e}$,
where
$al$ is $1$ in any alive state, and $0$ otherwise;
$ready'$ is $0$ if $holding(x)$ and $clear(y)$, and 1 otherwise;
$t'$ is the number of blocks not on the table,
and $e$ is as defined above.
Rules $r_1$--$r_4$ are compatible only with transitions where the gripper
is initially empty; none affects $al$, and all decrease $e$.
All their effects are compatible only with pick-ups from a tower (otherwise $c\mminus$),
hence do not affect $t'$, except for picking up $x$ when $y$ is clear,
(first effect of $r_1$), which increases $t'$ but makes $ready'$ decrease.
Rules $r_5$--$r_6$ are compatible only with putting a held block on the table,
decreasing $t'$, and do not affect $al$ or $ready'$.
A similar reasoning applies to rule $r_7$, except when the held block is $x$
and can be put on $y$, in which case $al$ decreases.

Since $\pi_\Phi$ is complete and descending, it solves $P$.
\qed

\subsection{Policy for $\Q_{grip}$}
The set of features $\Phi$ learned by \dtwol{} contains:
\begin{itemize}
 \item $r_B \equiv \abs{\exists at_g . \mli{at-robby}}$: whether the robot is at $B$.
 \item $c \equiv \abs{\exists carry . \top}$: number of balls carried by the robot.
 \item $b \equiv \abs{\neg (at_g = at)}$: number of balls not in room $B$.
\end{itemize}

\vspace*{6pt} \noindent The learned policy $\pi_\Phi$ is:
\begin{alignat*}{1}
&r_1:\, \{\neg r_B, c=0, b>0\}\ \mapsto\ \{c\pplus\}\,, \\
&r_2:\, \{r_B, c=0, b>0\}\ \mapsto\ \{\neg r_B\}\,, \\
&r_3:\, \{r_B, c>0, b>0\}\ \mapsto\ \{c\mminus, b\mminus\}\,, \\
&r_4:\, \{\neg r_B, c>0, b>0\}\ \mapsto\ \{r_B\}\,.
\end{alignat*}

There are no dead-ends in Gripper, and $b>0$ in any alive state of an instance $P$.
Let us show that $\pi_\Phi$ is complete over any problem instance $P$.
Let $s$ be an alive state.
If the robot is in room $A$ carrying some ball, the transition where it moves
to $B$ is compatible with $\pi_\Phi$ ($r_4$). If it is carrying no ball, then
there must be some ball in $A$ ($s$ is not a goal); picking it
is compatible with $r_1$.
Now, if the robot is in room $B$ carrying some ball, the transition where it drops it
is compatible with $r_3$; if it carries no ball, the transition where it moves to room
$A$ is compatible with $r_2$.

Policy $\pi_\Phi$ descends over tuple
$\tup{b_A, b_{RA}, b_{RB}, r_B}$, where
$b_A$ counts the number of balls in room $A$,
$b_{Rx}$ the number of balls held by the robot while in room $x$,
and $r_B$ is as defined above.
This is because
rule $r_1$ decreases $b_{A}$;
rule $r_2$ decreases $r_B$ without affecting the other features;
rule $r_3$ decreases $b_{RB}$, and
rule $r_4$ increases $b_{RB}$ but decreases $b_{RA}$.
Since $\pi_\Phi$ is complete and descending, it solves $P$.
\qed

\subsection{Policy for $\Q_{rew}$}
The set of features $\Phi = \{u, d\}$ contains features:
\begin{itemize}
\item $u \equiv \abs{reward}$: number of unpicked rewards.
\item $d \equiv Distance(at, adjacent:unblocked, reward)$:
Distance between the agent and the closest cell with some unpicked reward
along a path of unblocked cells.
\end{itemize}

\vspace*{6pt} \noindent The learned policy $\pi_\Phi$ is:
\begin{alignat*}{1}
  &r_1:\, \{r>0, d=0\}\ \mapsto\ \{ r\mminus, d\pplus\}\,, \\
  &r_2:\, \{r>0, d>0\}\ \mapsto\ \{ d\mminus\}\,.
\end{alignat*}

There are no dead-ends in $\Q_{rew}$.
Let us show that $\pi_\Phi$ is complete over any problem instance $P$.
Let $s$ be an alive state, hence $r>0$.
If the agent is in a cell with reward, picking the reward is always compatible with
rule $r_1$, since there can be at most one reward item per cell.
If there is no reward in the cell, then there must be a reward at some finite distance
(otherwise $s$ would either be a goal or unsolvable), and moving closer to the closest
reward is always possible and compatible with $r_2$.

It is straight-forward to see from the rule effects that $\pi_\Phi$ descends over tuple $\tup{r, d}$,
hence it solves $P$.
\qed

\subsection{Policy for $\Q_{deliv}$}
%
%
%
%
%

The set of features $\Phi$ learned by \dtwol{} contains:
\begin{itemize}
    \item $e \equiv \abs{empty}$: whether the truck is empty.
    \item $u \equiv \abs{\neg (at_g = at)}$: number of undelivered packages.
    \item $du \equiv \abs{Distance(C_{t}, adjacent, C_{cup})}$: distance between truck and closest undelivered package.
    \item $dt \equiv \abs{Distance(C_t, adjacent, \exists at_g^{-1}.\top)}$: distance between truck and target location.
\end{itemize}

For readability we have used  $C_{t} \equiv \exists at^{-1}.truck$ to stand for the concept denoting the
location of the truck, and
$C_{cup} \equiv (\forall at_g^{-1}.\bot) \sqcap (\exists at^{-1}.package)$
for the concept denoting the set of cells with some undelivered package on them.

\vspace*{6pt} \noindent The learned policy $\pi_\Phi$ is:
\begin{alignat*}{1}
&r_1:\, \set{\neg e, du>0, dt=0, u>0}      \mapsto\ \set{e, u\mminus}\, \\
&r_2:\, \set{\neg e, du=0, dt>0, u>0}      \mapsto\ \set{du \pplus, dt\mminus} \psep \set{dt\mminus} \, \\
&r_3:\, \set{\neg e, du>0, dt>0, u>0}      \mapsto\ \set{du ?, dt\mminus} \, \\
&r_4:\, \set{e, du=0, dt>0, u>0}      \mapsto\ \set{\neg e, du \pplus} \psep \set{\neg e}\, \\
&r_5:\, \set{e, du>0, dt=0, u>0}      \mapsto\ \set{du \mminus, dt \pplus} \, \\
&r_6:\, \set{e, du>0, dt>0, u>0}      \mapsto\ \set{du \mminus, dt \mminus} \psep \set{du \mminus, dt \pplus}\,.
\end{alignat*}

There are no dead-ends in Delivery.
Let us show that $\pi_\Phi$ is complete over any problem instance $P$.
Let $s$ be an alive state of $P$.
We know that $u>0$ in $s$, otherwise the state would be a goal.
Assume first that the truck is carrying some package.
If it is on the target location, $r_1$ is compatible with dropping the package,
which is always possible in the domain. If it is not, $r_2$--$r_3$ are compatible
with moving towards the target location, which is always possible, and cover all
possibilities regarding the distance to some other undelivered package.
Assume now the truck is empty. If it is on the same location as an undelivered package,
$r_4$ is compatible with picking the package, which is always possible.
If it is not, moving towards the closest undelivered package is possible and
compatible with rules $r_5$--$r_6$, which cover all possibilities regarding
the distance to the target.\footnote{Note that in a grid, moving along the four
cardinal directions always changes the (Manhattan) distance to any third, fixed cell.}

Let $\tup{u, e, du_{e}, dt_{\neg e}}$ be a feature tuple where
$u$ and $e$ are as defined above,
$du_{e}$ is equal to $du$ when the truck is empty, and to $0$ otherwise, and
$dt_{\neg e}$ is equal to $dt$ when the truck is carrying a package, and $0$ otherwise.
Syntactic inspection of the policy $\pi_\Phi$ shows that for any transition compatible with it,
the valuation of the feature tuple lexicographically decreases.
In rules $r_1$ and $r_4$, this is because of features $u$ and $e$;
in $r_2$--$r_3$, $dt_{\neg e}$ always decreases, and $u, e, du_{e}$
do not change their value;
in $r_5$--$r_6$, $du_{e}$ always decreases, whereas $u, e$ do not change their value.
Hence, $\pi_\Phi$ descends over the given feature tuple, and thus
solves $\Q_{deliv}$.
\qed

\subsection{Policy for $\Q_{visit}$}

The set of features $\Phi$ learned by \dtwol{} contains:
\begin{itemize}
\item $u \equiv \abs{\neg visited}$: number of unvisited objects.
\item $d \equiv Distance(\mli{at-robot}, connected, \neg visited)$: Distance between the robot and the closest unvisited cell.
\end{itemize}

\vspace*{6pt} \noindent The learned policy $\pi_\Phi$ is:
\begin{alignat*}{1}
&r_1:\, \{u>0, d>0\}\ \mapsto\ \{d \mminus \} \psep \{u \mminus, d \pplus \} \psep \{u \mminus \}
\end{alignat*}

There are no dead-ends in Visitall.
Let us show that $\pi_\Phi$ is complete over any problem instance $P$.
Let $s$ be an alive state,
hence $u>0$
and $d>0$ (as soon as the robot steps into an unvisited cell, it becomes visited).
If $d>1$, the robot can always move closer to an unvisited cell, which is compatible
with the first effect.
If $d=1$, the robot can always move into one of the distance-1 unvisited cells $x$,
which is compatible with one of the effects 2 and 3, depending on whether there is
some other unvisited cell adjacent to $x$ or not.

It is straight-forward to see from the rule effects that $\pi_\Phi$ descends over
tuple $\tup{u, d}$, and hence solves $P$.
\qed

\subsection{Policy for $\Q_{span}$}
The set of features $\Phi$ learned by \dtwol{} contains:
\begin{itemize}
\item $n \equiv \abs{tightened_g \sqcap \neg tightened}$: number of untightened nuts,
\item $h \equiv \abs{\exists at. \top}$: number of objects not held by the agent,
\item $e \equiv \abs{\exists at. (\forall at^{-1}.man)}$: whether the agent location is empty, i.e., there is no spanner or nut in it.
\end{itemize}

\vspace*{6pt} \noindent The learned policy $\pi_\Phi$ is:
\begin{alignat*}{1}
&r_1:\, \set{n>0,  h>0, e}      \ \mapsto\ \set{e?}\,, \\
&r_2:\, \set{n>0,  h>0, \neg e} \ \mapsto\ \set{h\mminus, e?} \psep \set{n\mminus}\,.
\end{alignat*}

It is clear that $n>0$ for all alive states in $\mathcal{A}$,
and $h>0$ in all states, as e.g.\ the nuts cannot be held by the agent,
and will always be \emph{at} some location.

Let $s$ be an alive state of an arbitrary problem instance $P$.
If the agent is on the gate, there must be some nut to be tightened
($s$ is not a goal). Since $s$ is solvable, the agent must be carrying
a usable spanner,
and tightening the nut is always an option, compatible with $r_2$, second effect.
Otherwise the agent is not on the gate. If there is some spanner to be picked up,
doing so is always possible, and compatible with $r_2$, first effect.
If instead the location is empty, moving right is always an option compatible
with rule $r_1$, and will affect $e$ or not depending on whether the
next location is empty or not.

Spanner is the only domain with dead-ends that we consider,
so we cannot show correctness simply by showing descendingness of $\pi_\Phi$.
However, the following argument gives an intuition of why the policy is correct:
%
$\pi_\Phi$ descends over $\tup{ms, n, d}$,
where $ms$ is the number of missed spanners, i.e.,
spanners that have been left behind and cannot be picked up anymore,
$n$ is as defined above, and $d$ is the distance between the agent and the gate.
Since $\pi_\Phi$ is not compatible with any transition that increases
$ms$, it avoids dead-ends.
\qed

\subsection{Policy for $\Q_{micon}$}
The set of features $\Phi$ learned by \dtwol{} contains:\footnote{
Note that we use the standard Miconic encoding, but after fixing a minor bug that
would allow passengers to magically appear in the origin floor at any time after they
have been transported to their destination floor.
}
\begin{itemize}
\item $b \equiv \abs{boarded}$: number of passengers onboard the lift,
\item $w \equiv \abs{\exists \mli{waiting-at} . \top}$: number of passengers waiting to board,
\item $rb \equiv \abs{\forall \mli{waiting-at} . \mli{lift-at}}$:
number of elements that either are not waiting to board, or are waiting
and \emph{ready to board}, i.e., on the same floor as the lift.

\item $rl \equiv \abs{boarded \sqcap \exists destin. \mli{lift-at}}$:
number of passengers \emph{ready to leave}, i.e.\ boarded with the lift on their destination floor.
\end{itemize}


\vspace*{6pt} \noindent The learned policy $\pi_\Phi$ is:
\begin{alignat*}{1}
&r_1:\, \{b=0,  w>0, rb>0, rl=0\}\ \mapsto\ \set{rb \pplus} \psep \set{w \mminus, b \pplus}     \\
&r_2:\, \{b>0,  w=0, rb>0, rl=0\}\ \mapsto\ \set{rl \pplus}     \\
&r_3:\, \{b>0,  w=0, rb>0, rl>0\}\ \mapsto\ \set{b \mminus, rl \mminus}     \\
&r_4:\, \{b>0,  w>0, rb>0, rl=0\}\ \mapsto\ \set{rl \pplus, rb ?} \psep \set{w \mminus, b \pplus} \\
&r_5:\, \{b>0,  w>0, rb>0, rl>0\}\ \mapsto\ \set{b \mminus, rl \mminus} \psep \set{w \mminus, b \pplus}.
\end{alignat*}

There are no dead-ends in Miconic, and in all states, $rb>0$ since (counterintuitively)
there is always some element, e.g., a floor, that is not waiting to board.
Note that the rule conditions partition the entire space of alive states, since
$b=0$ trivially implies $rl=0$, and it also implies that $w>0$, as if no passenger is waiting nor boarded,
it must be that she has been delivered to her destination (Miconic
does not allow passengers to leave on a floor other than their destination).

Let us show that $\pi_\Phi$ is complete over any problem instance $P$.
Let $s$ be an alive state.
If the lift is empty, there must be some passenger waiting.
If she is on the same floor, boarding her is compatible with $r_1$; otherwise,
moving to her floor is also compatible with $r_1$.
Assume now that someone is boarded on the lift.
If the lift is on her destination floor, leaving the lift is compatible with $r_3$, $r_5$.
If the lift is not on the floor of any boarded passenger, moving to the floor
of some boarded passenger is compatible with $r_2$, $r_4$.

Now, let us show that $\pi_\Phi$ descends over feature tuple
$\tup{w, b, m-rl, m-rb}$, where
$m$ is the total number of objects in $P$, and $w$, $rl$, $rb$ and $b$ are
as defined above.
By going through each of the rule effects, it is straight-forward to see that
allowed transitions always decrease the tuple valuation.
Since $\pi_\Phi$ is complete and descending, it solves $P$.
\qed

\subsection{Policy for $\Q_{bw}$}
In $\Q_{bw}$
we use a different but standard PDDL encoding where blocks are moved atomically
from one location to another (no gripper).
We say that a block $b$ is \emph{well-placed} if it is on its target location
(block or table), and so are all blocks below, otherwise $b$ is \emph{misplaced}.
Note that a goal can always be reached by moving misplaced blocks only.
The set of features $\Phi$ learned by \dtwol{} contains:
%

\begin{itemize}
\item $c \equiv \abs{clear}$: number of clear objects,
\item $t' \equiv \abs{\neg(on_g = on)}$:  number of objects that are not \emph{on} their final target,
\item $bwp \equiv \abs{\forall on^+. on_g = on}$: number of objects s.t.\ all objects below are well-placed,
i.e., in its goal configuration.
\end{itemize}

\vspace*{6pt} \noindent The learned policy $\pi_\Phi$ has one single rule $r_1$:
\begin{align*}
\set{c>0, t'>0, bwp>0}\ \mapsto\
&\set{c \pplus}                                  \psep
\set{c \pplus,  t' ?, bwp \pplus} \psep \\
&\set{c \pplus,  t' \mminus} \psep
\set{c \mminus, t' \mminus}.
\end{align*}

There are no dead-ends in $\Q_{bw}$.
All features in $\Phi$ are strictly positive on alive states
($c>0$ because there is always at least one tower,
$t'>0$ because otherwise the state is a goal,
$bwp>0$ because the table is always well-placed and below any block).
In this atomic-move encoding, $c\pplus$ iff some block goes from being
on another block to being on the table, and $c\mminus$ iff the opposite occurs.

Let us first prove that $\pi_\Phi$ is complete over any problem instance $P$.
Let $s$ be an alive state in $P$. Since $s$ is not a goal, there must be some
misplaced block. We make a distinction based on whether all
misplaced blocks are on the table or not.
If they are, it is easy to prove that there must be one of them, call it $b$,
such that its target location is clear and well-placed.
Moving $b$ onto its target location is then a possibility,
compatible with the last effect of $r_1$.

If, on the contrary, some misplaced block is not on the table, then it is
easy to see that there must be a misplaced block $b$ that is clear, since the
condition of being misplaced ``propagates'' upwards any tower of blocks.
Moving $b$ to the table is always possible; we only need to
show that it is always compatible with some rule effect.
If there is some misplaced block below $b$, then the move is compatible
with effect 2. Otherwise, it must be that $b$ is not on its target location.
If its target location is the table, moving to the table is
compatible with effect 3; otherwise, it is compatible with effect 1.

Additionally, $\pi_\Phi$ descends over tuple $\tup{wp', t''}$, where
$wp'$ is the number of blocks that are not well-placed, and
$t''$ the number of blocks that are not on the table.
Since $\pi_\Phi$ is complete and descending, it solves $P$.
Indeed,
$\pi_\Phi$ implements
the well-known policy that moves ill-placed blocks to the table,
then builds the target towers bottom-up.
\qed

\end{document}